\documentclass[10pt]{article} 
\usepackage[preprint]{tmlr}

\usepackage{amsmath,amsthm,amsfonts,bm,bbm}

\def\eqref#1{equation~\ref{#1}}

\def\1{\bm{1}}

\DeclareMathAlphabet{\mathsfit}{\encodingdefault}{\sfdefault}{m}{sl}
\SetMathAlphabet{\mathsfit}{bold}{\encodingdefault}{\sfdefault}{bx}{n}

\newtheorem{assumption}{Assumption}
\newtheorem{theorem}{Theorem}

\usepackage[hidelinks,hypertexnames=false]{hyperref}
\usepackage{url}

\usepackage{graphicx}
\usepackage{float}
\usepackage{longtable}
\usepackage{xcolor}

\title{Conformalized Quantum DeepONet Ensembles: Towards Scalable Operator Learning with Distribution-Free Guarantees}

\author{\name Purav Matlia \email pmatlia@purdue.edu \\
    \addr Department of Computer Science\\
    Purdue University
    \AND
    \name Christian Moya \email cmoyacal@purdue.edu \\
    \addr Department of Mathematics \\
    Purdue University
    \AND
    \name Guang Lin \email guanglin@purdue.edu \\
    \addr Department of Mathematics \\
    Department of Mechanical Engineering \\
    Purdue University}

\def\month{MM}  
\def\year{YYYY} 
\def\openreview{\url{https://openreview.net/forum?id=XXXX}} 

\begin{document}

    \maketitle

    \begin{abstract}
        Operator learning enables fast surrogate modelling of high-dimensional dynamical systems, but existing approaches face two fundamental limitations: the quadratic cost of dense neural layers and unreliable uncertainty quantification in safety-critical settings. We propose Conformalized Quantum DeepONet Ensembles, a framework that addresses both challenges simultaneously. Using Quantum Orthogonal Neural Networks (QOrthoNNs), we characterize the resource regime in which their established $\widetilde{\mathcal{O}}(n)$ hidden-layer running-time scaling improves on the $\mathcal{O}(n^2)$ cost of a classical dense layer. To quantify uncertainty, we combine ensemble predictions with split conformal calibration. For a new input--output function pair jointly exchangeable with the calibration pairs, we prove a finite-sample, distribution-free lower bound on the expected fraction of covered query locations. As a secondary proof-of-concept, we explore hybrid classical--quantum architectures and superposed execution to manage ensemble runtime and hardware requirements. Experiments on synthetic operator benchmarks and real-world power-system dynamics show accurate predictions under ideal simulation and empirical coverage near the target under both ideal conditions and selected compact-circuit simulations with depolarizing or device-calibrated composite noise. Together, these results connect resource-aware scalability analysis with finite-sample uncertainty guarantees for quantum operator learning.
    \end{abstract}

    \section{Introduction}
    Evaluating dynamical systems across large sets of hypothetical scenarios is central to scientific computing and safety-critical applications such as power systems. While neural operator methods such as DeepONet provide fast surrogate models for these tasks, they face two fundamental challenges: (i) scalability, due to the quadratic $\mathcal{O}(n^2)$ cost of dense neural layers, and (ii) reliability, as standard uncertainty quantification methods either lack rigorous guarantees or incur prohibitive computational overhead. These limitations are particularly acute in settings that require both fine discretization and trustworthy predictions.

    Traditional numerical simulations accurately analyse dynamical systems governed by high-dimensional differential equations \citep{dahlquist2008numerical, butcher2016numerical, liu2019mcsimulation}. However, repeatedly evaluating these simulators for thousands of hypothetical perturbations is computationally expensive, motivating the use of learned surrogate operators \citep{lu2021deeponet, li2023neuraloperators, azizzadenesheli2024neuraloperators, wen2022neuraloperator} to approximate the solution map between infinite-dimensional function spaces directly.

    Deep Operator Networks (DeepONet) \citep{lu2021deeponet, lu2022deeponetcomparison, lu2022multifidelitydeeponet, lin2023responsedeeponet} approximate solution maps by decomposing them into branch and trunk subnetworks. However, classical hidden layers within these networks rely on dense matrix-vector multiplications that incur an $\mathcal{O}(n^2)$ computational cost \citep{kerenidis2022qorthonn, landman2022medicalqorthonn, goodfellow2016deeplearning}. For Monte Carlo uncertainty quantification, the DeepONet must be evaluated repeatedly across many realizations and query coordinates, causing this quadratic hidden-layer cost to accumulate \citep{zhang2021mccost, milanovic2017mccost, ye2023mccost}. This motivates sub-quadratic hidden-layer mechanisms.

    Recent work proposes scalable architectures like the Quantum DeepONet \citep{xiao2025qdon} and Quantum Fourier Neural Operator \citep{jain2024qfno}. By replacing $\mathcal{O}(n^2)$ classical matrix multiplications with parameterized Reconfigurable Beam Splitter (RBS) circuits operating in the unary subspace \citep{kerenidis2022qorthonn, landman2022medicalqorthonn, xiao2025qdon}, Quantum DeepONets reduce hidden-layer running time complexity to $\widetilde{\mathcal{O}}(n)$ conditional on a resource regime. Reduced hidden-layer scaling is particularly relevant in settings that use fine discretization and wide hidden representations to capture high-frequency transients. However, a single Quantum DeepONet yields only a noisy point prediction and does not provide the epistemic uncertainty estimates required for safety-critical deployment.

    Bayesian Neural Networks \citep{neal1996bnn, goan2020bnn} and Monte Carlo dropout \citep{gal2016dropout} can model this uncertainty, but neither provides the distribution-free coverage guarantees required for safety-critical tasks. Conformalized deep ensembles \citep{angelopoulous2023conformal, moya2025conformal} instead combine model disagreement as a proxy for epistemic uncertainty with distribution-free coverage guarantees. While methods such as Quantum Conformal Prediction (QCP) bypass ensembling costs \citep{park2023qcp} by treating measurement shots from a single circuit as samples from a predictive distribution, for Quantum DeepONets, shot variability reflects quantum noise rather than epistemic uncertainty. Nevertheless, using independently trained models introduces an ensemble-execution trade-off: sequential evaluation scales runtime with $L$, whereas fully parallel evaluation scales qubit requirements with $L$, motivating the execution strategies we explore briefly.
    
	Unlike prior work that considers quantum operator learning and conformal prediction separately, we jointly address Quantum DeepONet scalability and conformal uncertainty quantification through the following contributions:
	\begin{itemize}
		\item \textbf{Resource-aware hidden-layer complexity}. We refine the Quantum DeepONet complexity analysis of \citet{xiao2025qdon} by accounting for classical parallelism and quantum resources, and we leverage the established hidden-layer scaling of Quantum Orthogonal Neural Networks (QOrthoNNs). Under this model, QOrthoNNs provide an asymptotic running-time improvement over classical dense layers within an explicit resource regime. This is a complexity result, not a measured wall-clock comparison.
		\item \textbf{Uncertainty quantification}. We integrate split conformal calibration with ensemble DeepONet outputs. We prove a finite-sample, distribution-free lower bound on the expected fraction of covered query locations for a new input--output function pair under exchangeability of complete pairs. Query locations within a pair may be arbitrarily dependent.
		\item \textbf{Exploratory hybrid and multiplexed ensembling}. We explore hybrid classical--quantum architectures and a multiplexed circuit inspired by Superposed Parameterized Quantum Circuits (SPQCs) \citep{patapovich2025spqc} as proof-of-concept strategies for managing the runtime and quantum-resource costs of ensembling. The multiplexed circuit reduces qubit use relative to fully parallel execution but can increase circuit depth and shot requirements.
		\item \textbf{Evaluation under ideal and simulated-noise regimes}. We evaluate prediction accuracy and empirical query-averaged coverage across synthetic operator benchmarks and power-system tasks under ideal conditions, with selected compact-circuit studies under depolarizing and device-calibrated composite noise simulations.
	\end{itemize}

    \begin{figure}[H]

        \begin{center}
            \includegraphics[width=\columnwidth]{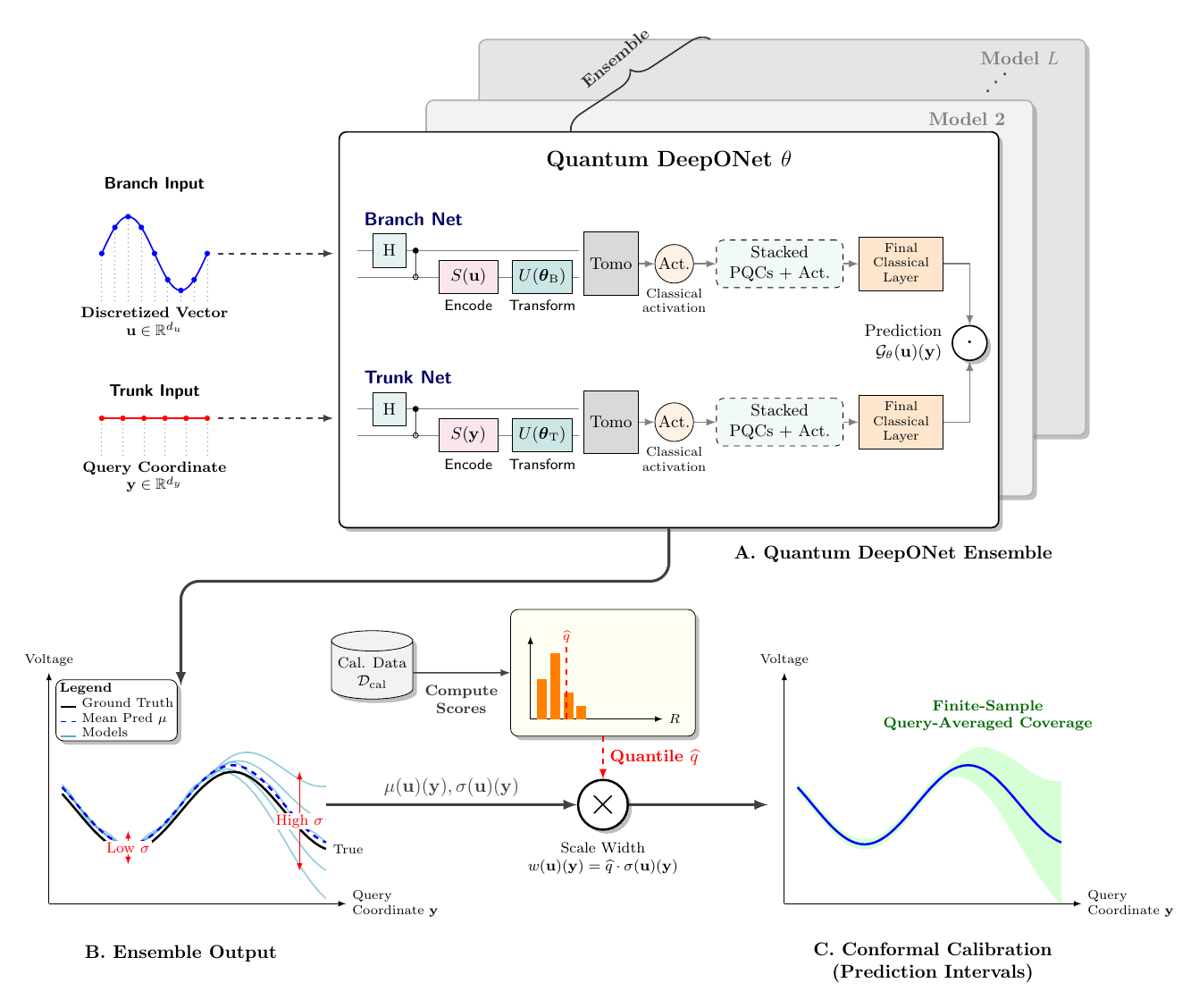}
        \end{center}

        \caption{Overview of the conformalized Quantum DeepONet ensemble framework. (A) Quantum DeepONet ensemble: independently trained operator models, each decomposed into branch and trunk subnetworks implemented using Quantum Orthogonal Neural Networks. Here, $\theta$ denotes the complete-model parameters, while $\boldsymbol{\theta}$ denotes the RBS-angle vector of a QOrthoNN layer; subscripts $\mathrm{B}$ and $\mathrm{T}$ identify branch and trunk layers, respectively. (B) Ensemble output: predictions are aggregated to compute the mean prediction $\mu(\mathbf{u})(\mathbf{y})$ and ensemble standard deviation $\sigma(\mathbf{u})(\mathbf{y})$, which serves as a model-disagreement scale. (C) Conformal calibration: using calibration data $\mathcal{D}_{\mathrm{cal}}$, nonconformity scores define a threshold $\widehat q$ and corresponding prediction intervals $C_{\alpha}(\mathbf{u},\mathbf{y})$, with a finite-sample lower bound on the expected fraction of covered query coordinates in a new input--output function pair that is jointly exchangeable with the calibration pairs.}
        \label{fig:main_framework}
    \end{figure}

	Figure~\ref{fig:main_framework} summarizes the resulting framework. The rest of the paper is organized as follows. Following a review of previous work in Section~\ref{sec:previous_work}, we formulate the problem in Section~\ref{sec:problemformulation}. Section~\ref{sec:method} details the resource-aware complexity analysis, quantum circuit design, ensemble construction, and exploratory execution strategies, and Section~\ref{sec:uq} introduces the conformal calibration procedure and establishes its finite-sample coverage bound. Section~\ref{sec:experiments} evaluates predictive accuracy and uncertainty calibration, and Section~\ref{sec:limitations} discusses limitations and future directions. A summary of notation is provided in Appendix~\ref{sec:nomenclature}.

    \section{Previous Work}
    \label{sec:previous_work}

    \paragraph{Operator learning and scientific machine learning.}
    The scientific machine learning community has introduced various data-driven frameworks for approximating the dynamic response of systems governed by ordinary and partial differential equations \citep{karniadakis2021physicsinformedml, baker2019scimlreport}. Early efforts focused on data-driven surrogates that either learn discrete-time evolution maps or infer continuous-time governing equations \citep{qin2019datadriven, raissi2018multistep}, and on sparse-regression schemes that recover governing equations directly from data \citep{brunton2016sindy}. To leverage physical priors and reduce data requirements, Physics-Informed Neural Networks (PINNs) \citep{raissi2019pinn} encode the governing equations as soft constraints in the loss, with extensions to stiff \citep{ji2020stiffpinn}, multi-scale \citep{leung2022nhpinn}, and differential--algebraic systems, including applications in power grids \citep{moya2023daepinn, huang2022pinnpower} and biology \citep{yazdani2020sysbiopinn}. Because PINNs typically require retraining whenever problem parameters change, attention has shifted to operator learning, in which neural architectures approximate mappings between infinite-dimensional function spaces. Deep Operator Networks (DeepONet) \citep{lu2021deeponet, lu2022deeponetcomparison, lu2022multifidelitydeeponet, lin2023responsedeeponet} and Fourier Neural Operators \citep{li2023neuraloperators, wen2022neuraloperator, azizzadenesheli2024neuraloperators} have demonstrated strong generalization across parametric PDE families and complex multi-physics applications, motivating their use as surrogates for safety-critical simulations such as power-system transient analysis \citep{moya2023deeponetgrid}.

	\paragraph{Quantum machine learning for scientific computing.}
	Parallel to these classical advances, quantum machine learning has been applied to differential equations through variational and hybrid quantum--classical formulations, including variational algorithms for nonlinear PDEs \citep{lubasch2020variational,joo2021quvapde}, differentiable quantum circuits and quantum kernel methods for differential equations \citep{kyriienko2021differentiable,paine2023quantumkernels}, quantum lattice-Boltzmann methods with multi-fidelity neural correction \citep{jacob2025multifidelityquantum}, and quantum analogues of physics-informed neural networks \citep{trahan2024qpinn,farea2025qcpinn}. Quantum Orthogonal Neural Networks (QOrthoNNs) \citep{kerenidis2022qorthonn, landman2022medicalqorthonn} encode data in the unary basis and realize orthogonal transformations with Reconfigurable Beam Splitter gates, and have been evaluated on medical-image classification tasks. Building on QOrthoNNs, Quantum DeepONet \citep{xiao2025qdon} extends operator learning to quantum settings, while the Quantum Fourier Neural Operator architecture \citep{jain2024qfno} uses a unary-basis quantum Fourier transform and has been evaluated across several benchmark PDE families. QuanONet \citep{wang2025quanonet} uses a hardware-efficient ansatz to learn solution operators for differential equations.

    \paragraph{Uncertainty quantification for scientific machine learning.}
    Reliable deployment of neural surrogates in safety-critical settings demands rigorous uncertainty quantification \citep{psaros2023uq, zou2024operatoruq}. Classical Bayesian neural networks \citep{mackay1992practical, neal1996bnn} model distributions over parameters but do not scale well or provide mathematical guarantees. Alternative approaches include deep ensembles \citep{lakshminarayanan2017deep}, Bayesian PINNs \citep{yang2021bpinn}, probabilistic CNNs for PDEs \citep{winovich2019convpdeuq}, and Bayesian formulations of DeepONet \citep{lin2023bdeeponet}. For a comprehensive review of these deep learning UQ techniques, we refer readers to \citet{abdar2021uqreview} and \citet{psaros2023uq}. These methods generally lack distribution-free coverage guarantees, motivating the adoption of conformal prediction \citep{vovk1999conformal, vovk2022conformallearning, angelopoulous2023conformal} to provide rigorous intervals for neural operators. Recent work further extends conformal methodology to the operator learning paradigm \citep{moya2025conformal}. In the quantum setting, \citet{park2023qcp} applied conformal prediction to quantum machine learning, but uncertainty-aware quantum operator learning remains largely unexplored. 

    \section{Problem Formulation}
    \label{sec:problemformulation}

   We aim to approximate a target operator $\mathcal{G}$ that maps input functions to pointwise outputs using a parameterized operator $\mathcal{G}_{\theta}$. We train $\mathcal{G}_{\theta}$ using a supervised dataset $\mathcal{D}=\{(\mathbf{u}_{i},\mathbf{y}_{ij},s_{ij})\}_{i=1,j=1}^{N,M_{i}}$, where $\mathbf{u}_{i}=(u_i(x_{1}),\dots,u_i(x_{d_u}))^{\top}\in\mathbb{R}^{d_u}$ represents a discretized input function evaluated at $d_u$ fixed sensor locations. The variable $\mathbf{y}_{ij}\in\mathbb{R}^{d_{y}}$ denotes a spatial or temporal query coordinate, and $s_{ij}=\mathcal{G}(\mathbf{u}_{i})(\mathbf{y}_{ij})\in\mathbb{R}$ represents the corresponding ground-truth value of the operator evaluated at that specific coordinate. To solve this operator learning problem, we use the Deep Operator Network (DeepONet) \citep{lu2021deeponet, lu2022deeponetcomparison, lu2022multifidelitydeeponet, lin2023responsedeeponet} as our foundational approximation architecture.

    The DeepONet architecture parameterizes $\mathcal{G}_{\theta}$ using two subnetworks: a branch network that encodes discretized input functions and a trunk network that embeds spatial or temporal query coordinates. For any generic input function $\mathbf{u}$ and query coordinate $\mathbf{y}$, the network approximates the target operator by computing the inner product of the $p$-dimensional branch output vector, $\mathbf{b}$, and trunk output vector, $\mathbf{t}$, and adding a trainable scalar bias $\beta$:
    \[\mathcal{G}_{\theta}(\mathbf{u})(\mathbf{y}) = \sum_{k=1}^{p} b_{k}t_{k}+\beta = \mathbf{b}^\top\mathbf{t}+\beta \approx \mathcal{G}(\mathbf{u})(\mathbf{y}) \in \mathbb{R}.\]
    We denote all trainable model parameters by $\theta \in \mathbb{R}^{d_\theta}$ and optimize them simultaneously by minimizing the Mean Squared Error (MSE) across the dataset, unless otherwise noted in Appendix~\ref{sec:train_hp}:
    \[\mathcal{L}(\theta) = \frac{1}{N} \sum_{i=1}^{N} \frac{1}{M_{i}} \sum_{j=1}^{M_{i}} (\mathcal{G}_{\theta}(\mathbf{u}_{i})(\mathbf{y}_{ij}) - s_{ij})^{2}\]
    where $d_\theta$ is the total number of parameters.
    
    Although this architecture is highly expressive, it introduces a computational bottleneck. Assuming an input size of $d_{\mathrm{in}}$ and a uniform hidden layer width of $n$, the initial layer maps the input dimension to $n$ at a cost of $\mathcal{O}(d_{\mathrm{in}} \cdot n)$. Subsequent dense hidden layers apply $\mathcal{O}(n^{2})$ matrix transformations followed by learned biases and nonlinear activations. At test time, when the learned operator must be evaluated across thousands of hypothetical conditions in a scenario batch, the $\mathcal{O}(n^2)$ cost of each dense layer accumulates across the batch.

	The DeepONet defined above produces point predictions but does not quantify uncertainty. Our uncertainty objective is to construct prediction sets $C_\alpha(\mathbf{u},\mathbf{y})$ and calibrate them toward the target level $1-\alpha$. Let $\mathbf{u}_*$ denote a new generic input function, $\mathbf{y}_{*,1},\ldots,\mathbf{y}_{*,M}$ the query coordinates, and $s_{*,1},\ldots,s_{*,M}$ the corresponding true values. We consider the expected fraction of covered values:
	\[\mathbb{E}\left[\frac{1}{M}\sum_{j=1}^{M}\mathbf{1}\left\{s_{*,j}\in C_\alpha(\mathbf{u}_*,\mathbf{y}_{*,j})\right\}\right].\]
	Equivalently, if $J\sim\operatorname{Unif}\{1,\ldots,M\}$ is an independently drawn query index, then
	\[\mathbb{E}\left[\frac{1}{M}\sum_{j=1}^{M}\mathbf{1}\left\{s_{*,j}\in C_\alpha(\mathbf{u}_*,\mathbf{y}_{*,j})\right\}\right]=\mathbb{P}\left\{s_{*,J}\in C_\alpha(\mathbf{u}_*,\mathbf{y}_{*,J})\right\}.\]
	We refer to this quantity as query-averaged marginal coverage. Because evaluations across query coordinates form a discretized output function, query-averaged coverage summarizes reliability across the predicted function without requiring simultaneous coverage at every coordinate. Simultaneous coverage is a stronger objective and generally requires wider prediction bands. These objectives motivate the resource-aware scalability analysis and conformalized Quantum DeepONet ensemble framework developed next.

    \section{Method}
    \label{sec:method}

	Our method uses QOrthoNN hidden layers in the branch and trunk networks of independently trained Quantum DeepONets, whose predictions are subsequently calibrated using conformal prediction. We begin by characterizing the hidden-layer scaling so that the scope of the computational claim is clear before the full architecture is introduced. We then describe the Quantum DeepONet architecture and its training and evaluation, followed by the ensemble quantities used for conformal calibration. Finally, because sequential ensemble execution increases runtime whereas fully parallel execution duplicates quantum resources, we explore hybrid classical--quantum and Superposed Parameterized Quantum Circuits (SPQC)-inspired multiplexed implementations as secondary strategies.

    \subsection{Resource-Aware Hidden-Layer Complexity}
    
	\paragraph{QOrthoNN construction and prior claim.} Quantum DeepONet replaces classical dense hidden transformations with Quantum Orthogonal Neural Network (QOrthoNN) layers \citep{kerenidis2022qorthonn,landman2022medicalqorthonn,xiao2025qdon}. A square width-$n$ QOrthoNN layer uses $n$ data qubits and one tomography ancilla and consists of a data loader $S(\cdot)$, a parameterized unitary $U(\cdot)$ arranged in a pyramidal circuit, and tomography. The data loader and parameterized circuit are built from Reconfigurable Beam Splitter (RBS) gates, which are also reused within tomography. RBS gates require only nearest-neighbour connectivity and admit an $\mathcal{O}(1)$ decomposition into standard basis gates. We analyse the square transformation here and defer the generalized $m\times n$ construction to Appendix~\ref{sec:quantum_state_evolution}. The circuit construction and depth estimates summarized below follow \citet{kerenidis2022qorthonn} and \citet{landman2022medicalqorthonn}; we refer readers to these works for the full derivations. Building on this analysis, \citet{xiao2025qdon} report a quantum-layer feedforward cost of $\mathcal{O}(n/\delta^2)$ against the $\mathcal{O}(n^2)$ cost of a classical dense layer and present this reduction as an advantage for frequent Quantum DeepONet evaluations. We refine their comparison by applying the complete-output recovery bound of \citet{landman2022medicalqorthonn} and accounting for classical parallelism, thereby identifying the resource regime in which the hidden-layer advantage holds.
	
	\paragraph{Quantum depth--shot cost.} The data loader has depth $n-1$, and the pyramidal transformation has depth at most $2n-3$. Tomography adds two data loaders and $\mathcal{O}(1)$ supplementary gates, leaving the complete circuit with $\mathcal{O}(n)$ depth. Recovering the $n$-component output so that every component has absolute error at most $\delta$ requires $\mathcal{O}(\log(n)/\delta^2)$ circuit repetitions \citep{landman2022medicalqorthonn}. Combined with the $\mathcal{O}(n)$ circuit depth, these repetitions yield a hidden-layer depth--shot cost of $\mathcal{O}(n\log(n)/\delta^2)$, or $\widetilde{\mathcal{O}}(n)$ for fixed $\delta$. The logarithmic factor comes from requiring the absolute recovery error to be at most $\delta$ simultaneously across all $n$ output components rather than for a single component; it accounts for the difference from the $\mathcal{O}(n/\delta^2)$ feedforward cost quoted above. This is a reduction in sequential depth, not total operations: each repetition performs the same quadratic number of planar rotations as the corresponding classical Givens-rotation implementation, but arranges disjoint RBS gates in parallel.
	
	\paragraph{Resource accounting.} Let $P=P(n)\in\mathbb{N}$ denote the classical processors assigned to a width-$n$ hidden-layer evaluation. Under the standard work--depth model, Brent's scheduling bound gives time $\mathcal{O}(W/P+D)$ for work $W$ and depth $D$ \citep{brent1974parallel,blelloch1996programming}. Ignoring memory movement and communication, a dense $n\times n$ matrix--vector product has $\mathcal{O}(n^2)$ work and $\mathcal{O}(\log(n))$ depth, giving parallel time $\mathcal{O}(n^2/P+\log(n))$. GPU kernels distribute rows or tiles across processing units, while distributed implementations can assign row blocks across additional nodes; the effective $P$ can therefore grow with $n$ until the allocated hardware is saturated. In contrast, \citet{xiao2025qdon} compare the quantum depth obtained by executing disjoint gates in parallel across $\mathcal{O}(n)$ qubits with $\mathcal{O}(n^2)$ sequential classical work, corresponding to $P=\mathcal{O}(1)$. Because the quantum model allows the qubit count to grow with $n$, we likewise allow $P$ to grow with $n$ and determine the processor-scaling regime in which the quantum advantage holds. We count the quantum repetitions sequentially on one width-$n$ circuit: simultaneous execution across multiple quantum processing units is not part of our model and would duplicate the $\mathcal{O}(n)$ circuit qubits. 
	
	\paragraph{Advantage regime.} Comparing the quantum depth--shot cost with the classical parallel time, the quantum bound grows more slowly when $P=o(n\delta^2/\log(n))$. For fixed $\delta$, this becomes $P=o(n/\log(n))$, which includes fixed or polylogarithmic processor allocations and sublinear polynomial growth. If $P$ instead grows proportionally with $n$, the classical bound becomes $\mathcal{O}(n)$ and the quantum depth--shot cost provides no asymptotic advantage.
	
	\paragraph{Scope.} This result concerns asymptotic hidden-layer scaling, not measured wall-clock performance. It omits hardware latency, memory and communication costs, and implementation-specific GPU, TPU, and quantum-hardware effects; we therefore do not claim a practical speedup on current hardware. Nor does the result establish an end-to-end asymptotic improvement. Here, $p$ denotes the branch and trunk output dimension defined in Section~\ref{sec:problemformulation}. Under the common width-preserving choice $p=n$, used throughout our experiments, the final map in each subnetwork is $n\times n$ and contributes $\mathcal{O}(n^2)$ arithmetic. The advantage is therefore restricted to QOrthoNN hidden transformations under the stated computational model and resource regime. We next describe the Quantum DeepONet architecture containing these transformations.

    \subsection{Quantum DeepONet Architecture}

    Each Quantum DeepONet consists of two subnetworks: a branch Quantum Orthogonal Neural Network (QOrthoNN) that processes the input function $\mathbf{u}$, and a trunk QOrthoNN that processes the query coordinate $\mathbf{y}$. Within these subnetworks, Parameterized Quantum Circuits (PQCs) execute orthogonal linear transformations, which are subsequently passed through classical nonlinear activations. To relax the orthogonality constraint imposed by the QOrthoNN hidden layers and increase the model's expressive capacity, each QOrthoNN ends with an unconstrained classical linear layer. These quantum transformations are built from the Reconfigurable Beam Splitter (RBS) gate:
	\[
	U_\mathrm{RBS}(\phi) =
	\begin{pmatrix}
		1 & 0 & 0 & 0 \\
		0 & \cos\phi & \sin\phi & 0 \\
		0 & -\sin\phi & \cos\phi & 0 \\
		0 & 0 & 0 & 1
	\end{pmatrix}
	\]
    where $\phi$ denotes the rotation angle of a single RBS gate. The RBS gate rotates only the unary amplitudes associated with $|01\rangle$ and $|10\rangle$, leaving $|00\rangle$ and $|11\rangle$ unchanged. It therefore preserves Hamming weight, so the data register remains within the unary subspace during data loading and the parameterized transformation. Measurements outside this subspace can be discarded during tomography, enabling unary post-selection for error mitigation.

    To describe a single QOrthoNN hidden layer, we consider an $n\times n$ transformation and defer the generalized $m\times n$ formulation to Appendix~\ref{sec:quantum_state_evolution}. QOrthoNNs use unary amplitude encoding, so the encoded input must have unit $L_2$-norm. For the square first-layer construction considered here, the raw input dimension is $d_{\mathrm{in}}=n-1$. We map raw inputs to $[-1,1]$ using fixed training-set min--max bounds, clipping calibration and test values to these bounds. Denoting the normalized vector by $\bar{\mathbf{x}}\in\mathbb{R}^{d_{\mathrm{in}}}$, we scale it by $1/\sqrt{d_{\mathrm{in}}}$ and append the slack coordinate $\sqrt{1-\lVert\bar{\mathbf{x}}/\sqrt{d_{\mathrm{in}}}\rVert_2^2}$, producing the $n$-dimensional unit input. Before each subsequent hidden layer, intermediate outputs are divided by their $L_2$ norm.

    Once the $n$-dimensional normalized input vector $\mathbf{x} \in \mathbb{R}^n$ is prepared, the PQC executes the orthogonal linear transformation $\mathbf{W} \in \mathbb{R}^{n \times n}$ using $n$ data qubits and one ancillary qubit. First, a data-dependent unitary $S(\mathbf{x})$ loads the input vector onto the data register via amplitude encoding, yielding the quantum state:
    \[
        |\psi\rangle_\mathrm{data} = x_1 |10\cdots0\rangle + x_2 |01\cdots0\rangle + \cdots + x_n |00\cdots1\rangle = \sum_{i=1}^{n} x_i |\mathbf{e}_i\rangle
    \]
    where $|\mathbf{e}_i\rangle$ denotes a unary computational basis state. Let $\boldsymbol{\theta}$ denote the vector of RBS rotation angles in a QOrthoNN layer. The corresponding pyramidal unitary $U(\boldsymbol{\theta})$ emulates the weight matrix $\mathbf{W}$ and evolves the data register to:
    \[
        |\psi\rangle = \sum_{j=1}^{n}\sum_{i=1}^{n}W_{ji}x_i|\mathbf{e}_j\rangle
    \]
    Tomography then recovers the transformed amplitudes as classical values using the ancilla and two additional $S$ data loaders; Appendix~\ref{sec:quantum_state_evolution} details this sub-circuit and the complete state evolution. A learned bias and nonlinear activation complete the hidden-layer computation.

    Having established the forward pass, we train the network classically rather than directly on Noisy Intermediate-Scale Quantum (NISQ) hardware \citep{bharti2022nisq,preskill2018nisq}. Each parameterized quantum transformation has an exact classical representation as an orthogonal matrix $\mathbf{W}$ parameterized by the RBS angles $\boldsymbol{\theta}$, allowing standard deep learning frameworks to emulate it without quantum state simulation. Automatic differentiation optimizes these angles at $\mathcal{O}(n^2)$ cost per layer while maintaining orthogonality by construction \citep{kerenidis2022qorthonn,landman2022medicalqorthonn}. This avoids the $\mathcal{O}(n^3)$ Singular Value Decomposition or Stiefel manifold projection steps incurred by projection-based approaches \citep{kerenidis2022qorthonn,landman2022medicalqorthonn,li2021orthonn}. After training, the learned angles are deployed to the PQC with the layer-level depth scaling described above.
    
    For the square transformations analysed above, orthogonal parameterization also controls conditioning. For any activation $\mathbf{x}$ and back-propagated gradient $\mathbf{g}$, an orthogonal weight matrix $\mathbf{W}$ satisfies $\lVert\mathbf{W}\mathbf{x}\rVert_2=\lVert\mathbf{x}\rVert_2$ and $\lVert\mathbf{W}^{\top}\mathbf{g}\rVert_2=\lVert\mathbf{g}\rVert_2$. Thus, a square orthogonal transformation has condition number one and does not itself amplify or attenuate activations or gradients. Prior work also associates orthogonal weights with improved generalization and training stability, although these empirical benefits depend on the architecture and task \citep{li2021orthonn,kerenidis2022qorthonn,landman2022medicalqorthonn}.

    \subsection{Ensemble-Based Model Uncertainty}
	\label{sec:ensemble_construction}
	
	Recent work explores using stochastic quantum measurements to quantify uncertainty \citep{park2023qcp}. For a single QOrthoNN output component $o_j$, however, the shot-induced standard deviation scales as $\operatorname{Std}(o_j)\propto 1/\sqrt{N_\mathrm{shots}}$ and therefore vanishes as the measurement budget increases, regardless of input difficulty or out-of-distribution shifts \citep{xiao2025qdon}. Because this variability does not represent epistemic uncertainty, we instead construct ensembles of independently trained Quantum DeepONets.
	
    To induce predictive diversity, we train each ensemble member, $\mathcal{G}_{\theta_\ell}$, independently on the full dataset using distinct random shuffling and parameter initializations. At inference, the ensemble mean and standard deviation are
    \[
    \mu(\mathbf{u})(\mathbf{y})=\frac{1}{L}\sum_{\ell=1}^{L}\mathcal{G}_{\theta_\ell}(\mathbf{u})(\mathbf{y}),\qquad \sigma(\mathbf{u})(\mathbf{y})=\sqrt{\frac{1}{L}\sum_{\ell=1}^{L}\left(\mathcal{G}_{\theta_\ell}(\mathbf{u})(\mathbf{y})-\mu(\mathbf{u})(\mathbf{y})\right)^2}.
    \]
    Although we evaluated nonparametric bootstrapping (bagging) as an alternative mechanism, we observed no performance gain. This agrees with prior observations for deep ensembles: a bootstrap sample contains only about 63\% unique training examples, and the resulting loss of data can outweigh the added diversity when random initialization and shuffling already provide model variation \citep{lakshminarayanan2017deep,lee2015mheads}.

	\subsection{Exploratory Ensemble Execution}
	
	Executing the ensemble introduces a separate resource trade-off. Sequential evaluation multiplies evaluation time by $L$, whereas naive parallel evaluation requires $\mathcal{O}(Ln)$ qubits. We therefore consider two strategies for managing these costs: a hybrid classical--quantum partition and SPQC-inspired multiplexed execution.

    \subsubsection{Hybrid Classical--Quantum Architectures}

    The DeepONet branch--trunk decomposition permits either subnetwork to be selectively classical. We focus on a classical branch and quantum trunk because the branch representation is computed once per input function, whereas the trunk is evaluated at each of the $M$ query coordinates. Under the idealized model above, a width-$n$ classical branch hidden layer has running time $\mathcal{O}(n^2/P+\log n)$, which becomes $\mathcal{O}(n^2/(PM)+\log(n)/M)$ per query after amortization. Each width-$n$ QOrthoNN trunk hidden layer has depth--shot cost $\mathcal{O}(n\log(n)/\delta^2)$ at every query \citep{kerenidis2022qorthonn,landman2022medicalqorthonn}. The amortized branch work term is lower order when $PM=\omega(n\delta^2/\log(n))$; for fixed $\delta$, this becomes $PM=\omega(n/\log(n))$.
    
    Alternatively, one could reduce resource demands by sharing a single subnetwork across all ensemble members. However, this strategy sacrifices predictive diversity. By forcing the independent, unshared subnetworks to adapt to a fixed intermediate output, this shared representation acts as an informational bottleneck. Thus, we expect it to constrain the ensemble's variance and degrade uncertainty quantification.

    \subsubsection{SPQC-Inspired Multiplexed Ensemble Execution}

	The second strategy uses address-conditioned multiplexing as an intermediate point between sequential and fully parallel execution. Our construction is inspired by Superposed Parameterized Quantum Circuits (SPQCs), which combine Flip-Flop Quantum Random Access Memory (FFQRAM) parameter routing with repeat-until-success (RUS) nonlinearities \citep{patapovich2025spqc}. Patapovich et al. report that FFQRAM state preparation requires $\mathcal{O}(nL)$ one- and two-qubit gates but has $\mathcal{O}(\log n)$ non-Clifford $T$-depth under RUS synthesis and ancilla recycling. We adopt only the address-conditioned routing idea and do not claim these bounds or equivalence to the complete SPQC architecture.
        
    Prepare a $\lceil\log_2 L\rceil$-qubit address register in a uniform superposition over the states $|\ell-1\rangle_{\mathrm{addr}}$, $\ell=1,\ldots,L$, with unused basis states assigned zero amplitude. Uniformly controlled rotations apply the member-specific unary data loader $S(\mathbf{x}^{(\ell)})$ and RBS transformation $U(\boldsymbol{\theta}^{(\ell)})$ to a shared $n$-qubit data register. Including one tomography ancilla, a square width-$n$ layer uses $n+1+\lceil\log_2 L\rceil$ qubits, compared with $n+1$ qubits under sequential execution and $L(n+1)$ qubits under fully parallel execution. Omitting the tomography ancilla, the multiplexed layer produces the state
	\[
	|\psi\rangle=\frac{1}{\sqrt{L}}\sum_{\ell=1}^{L}|\ell-1\rangle_{\mathrm{addr}}\otimes U(\boldsymbol{\theta}^{(\ell)})S(\mathbf{x}^{(\ell)})|\mathbf{e}_1\rangle_{\mathrm{data}}.
	\]
    
    Here, $\mathbf{x}^{(\ell)}$ and $\boldsymbol{\theta}^{(\ell)}$ are the layer input and RBS parameters for ensemble member $\ell$, and $|\mathbf{e}_1\rangle_{\mathrm{data}}$ is the initial unary basis state. The members share the external input in the first layer, whereas their inputs can differ in subsequent layers after member-specific activations. Tomography extracts each member's output from outcomes associated with its address state; the bias, nonlinear activation, and processing between quantum layers are then applied classically.
    
	Compared with a single-member circuit, multiplexing increases circuit depth through controlled routing and requires $\mathcal{O}(L)$ times as many shots to maintain fixed per-member precision. Relative to sequential execution, it therefore retains the same $\mathcal{O}(L)$ total-shot scaling and uses $\lceil\log_2 L\rceil$ additional qubits, but consolidates $L$ member-specific circuit instances into one combined circuit per layer input. \citet{patapovich2025spqc} contrast the complete SPQC with sequential execution, describing it as trading ``the classical overhead of repeated circuit initialization for a small ancilla register.'' Because our implementation adopts only the routing idea and uses address-resolved tomography and classical processing between layers, we do not assume that this runtime claim transfers to our construction. Relative to fully parallel execution, the construction reduces qubit use from $L(n+1)$ to $n+1+\lceil\log_2 L\rceil$. We therefore evaluate it as a compact proof-of-concept with $L=4$ rather than provide a complete characterization of the qubit--depth--compilation--sampling trade-off.

    \section{Conformal Calibration}
    \label{sec:uq}

    The ensemble standard deviation measures disagreement among independently trained models, but does not by itself provide calibrated prediction intervals. We therefore use a separate calibration dataset to convert this disagreement into uncertainty bounds \citep{angelopoulous2023conformal,vovk1999conformal,vovk2022conformallearning,shafer2008conformaltutorial,papadopoulos2002conformalregression}.
	
	Conventional split conformal prediction gives a finite-sample marginal coverage guarantee at a new test point under exchangeability of the calibration and test points \citep{vovk1999conformal,papadopoulos2002conformalregression,angelopoulous2023conformal}. Applied naively to DeepONet, each query evaluation would be treated as a separate example and assigned a pointwise prediction interval. However, the query evaluations associated with one input function jointly represent a discretized output function and may be strongly dependent. We therefore take one input function together with its query coordinates and output values as the unit of calibration, and assume exchangeability only across these complete units. Nothing is assumed about the queries within a unit; they may be arbitrarily dependent, as evaluations of a single smooth solution often are. Consequently, the guarantee is neither fixed-query nor simultaneous. It constrains coverage averaged over the query coordinates of a new output function.

	For any generic input function $\mathbf{u}$ and query coordinate $\mathbf{y}$, let $\mu(\mathbf{u})(\mathbf{y})$ and $\sigma(\mathbf{u})(\mathbf{y})$ denote the ensemble mean and standard deviation of Section~\ref{sec:ensemble_construction}, and set $\tilde\sigma(\mathbf{u})(\mathbf{y})=\sigma(\mathbf{u})(\mathbf{y})+\epsilon$ where $\epsilon>0$. Abbreviate $\mu_{ij}=\mu(\mathbf{u}_{i})(\mathbf{y}_{ij})$ and $\tilde\sigma_{ij}=\tilde\sigma(\mathbf{u}_{i})(\mathbf{y}_{ij})$, and define the nonconformity score
	\[
	R_{ij}=\frac{|s_{ij}-\mu_{ij}|}{\tilde\sigma_{ij}}.
	\]
	Normalizing by the ensemble standard deviation makes the interval width vary across the domain, widening where the ensemble members disagree.
	
	Let the calibration dataset contain $K$ input functions,
	\[
	\mathcal{D}_{\mathrm{cal}}
	=
	\{\mathcal{D}_{\mathrm{cal},i}\}_{i=1}^{K},
	\qquad
	\mathcal{D}_{\mathrm{cal},i}
	=
	\{(\mathbf{u}_{i},\mathbf{y}_{ij},s_{ij})\}_{j=1}^{M_i},
	\]
	where input function $i$ is evaluated at $M_i$ query coordinates. Let $N_{\mathrm{cal}}=\sum_{i=1}^{K}M_i$ denote the total number of calibration scores, and let $R^{\mathrm{cal}}_{(1)}\leq\cdots\leq R^{\mathrm{cal}}_{(N_{\mathrm{cal}})}$ denote those scores in increasing order. We take $\widehat q$ to be the linearly interpolated empirical $(1-\alpha)$-quantile: setting
	\[
	a=(N_{\mathrm{cal}}-1)(1-\alpha),
	\qquad
	r=\lfloor a\rfloor+1,
	\qquad
	\gamma=a-\lfloor a\rfloor,
	\]
	we define the threshold as
	\[
	\widehat q=(1-\gamma)R^{\mathrm{cal}}_{(r)}+\gamma R^{\mathrm{cal}}_{(r+1)}.
	\]
	We use this common interpolation convention throughout. Other empirical-quantile conventions can yield different, and potentially stronger, finite-sample bounds. The prediction interval is then
	\[
	C_\alpha(\mathbf{u},\mathbf{y})
	=
	\left[
	\mu(\mathbf{u})(\mathbf{y})-\widehat q\,\tilde\sigma(\mathbf{u})(\mathbf{y}),
	\;
	\mu(\mathbf{u})(\mathbf{y})+\widehat q\,\tilde\sigma(\mathbf{u})(\mathbf{y})
	\right].
	\]
	The true value lies in $C_\alpha(\mathbf{u},\mathbf{y})$ if and only if the score at that coordinate is at most $\widehat q$. Section~\ref{sec:guarantee} uses this equivalence to lower-bound the query-averaged coverage of a new output function.
	
    \subsection{Finite-Sample Guarantee}
    \label{sec:guarantee}
    
    We now establish a finite-sample lower bound on the query-averaged coverage defined in Section~\ref{sec:problemformulation}. Let $\mathbf{R}_{i}=(R_{i1},\ldots,R_{iM})$ denote the score vector of unit $i$. The proof adapts the group-symmetrization argument of \citet{lee2023hierarchical} to query-averaged coverage without within-unit exchangeability.
    
    \begin{assumption}
    	\label{ass:exch}
    	Every input function in the calibration and test sets is evaluated at the same number $M$ of query coordinates, so that $N_{\mathrm{cal}}=KM$. The ensemble is trained on data disjoint from the calibration and test sets. Conditional on the trained ensemble and a fixed inference mechanism, all calibration and test units are jointly exchangeable. Nothing is assumed about the joint distribution of the $M$ queries within a unit.
    \end{assumption}
    
    The score vectors are therefore exchangeable because the same prediction and scoring procedures are applied to every unit.
	    
	\begin{theorem}
		\label{thm:coverage}
		Let $\mathbf{u}_{*}$ denote the input function of a generic test unit, evaluated at query coordinates $\mathbf{y}_{*,1},\ldots,\mathbf{y}_{*,M}$ with ground truth values $s_{*,1},\ldots,s_{*,M}$, and let $r=\lfloor(KM-1)(1-\alpha)\rfloor+1$. Under Assumption~\ref{ass:exch},
		\[
		\mathbb{E}\left[
		\frac{1}{M}\sum_{j=1}^{M}
		\mathbf{1}\left\{s_{*,j}\in 	C_\alpha(\mathbf{u}_{*},\mathbf{y}_{*,j})\right\}
		\right]
		\geq
		\frac{r}{(K+1)M}.
		\]
	\end{theorem}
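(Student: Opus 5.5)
The plan is to adapt the group-symmetrization argument of \citet{lee2023hierarchical}. Let $\mathbf{R}_{*}=(R_{*,1},\ldots,R_{*,M})$ be the score vector of the test unit, and write $\mathbf{R}_{K+1}:=\mathbf{R}_{*}$. The whole proof replaces the calibration threshold by a statistic that is symmetric in all $K+1$ units, and then applies exchangeability.

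\textbf{Step 1: reduce coverage to a score comparison.} By the equivalence noted after the definition of $C_\alpha$, we have $s_{*,j}\in C_\alpha(\mathbf{u}_*,\mathbf{y}_{*,j})$ if and only if $R_{*,j}\le\widehat q$. This uses $\tilde\sigma>0$, which holds because $\epsilon>0$.

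\textbf{Step 2: lower-bound the threshold by a calibration order statistic.} Since $\gamma\in[0,1)$ and $R^{\mathrm{cal}}_{(r)}\le R^{\mathrm{cal}}_{(r+1)}$, the interpolated quantile satisfies $\widehat q\ge R^{\mathrm{cal}}_{(r)}$. Here $r\le KM$ because $(KM-1)(1-\alpha)+1\le KM$. In the edge case $r=KM$ we have $\gamma=0$, so the undefined index $r+1$ never matters. Hence
\[
\mathbf{1}\{R_{*,j}\le\widehat q\}\ge\mathbf{1}\{R_{*,j}\le R^{\mathrm{cal}}_{(r)}\}.
\]

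\textbf{Step 3: pass to a pooled order statistic.} Let $Q$ denote the $r$-th smallest of all $(K+1)M$ pooled scores $\{R_{ij}:1\le i\le K+1,\,1\le j\le M\}$. Adding the $M$ test scores to the multiset can only lower each order statistic, so $Q\le R^{\mathrm{cal}}_{(r)}$. Therefore
\[
\mathbf{1}\{R_{*,j}\le\widehat q\}\ge\mathbf{1}\{R_{*,j}\le Q\}.
\]
The key point is that $Q$ is a symmetric function of the unordered collection of unit score vectors. It is invariant under any permutation of the $K+1$ units, and no permutation of queries within a unit is needed.

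\textbf{Step 4: symmetrize.} By Assumption~\ref{ass:exch}, the score vectors $(\mathbf{R}_1,\ldots,\mathbf{R}_{K+1})$ are exchangeable. The same prediction and scoring maps are applied to every unit, so this holds conditional on the trained ensemble. Combined with the permutation invariance of $Q$, this gives, for each $i$,
\[
\mathbb{E}\Big[\tfrac1M\textstyle\sum_{j}\mathbf{1}\{R_{K+1,j}\le Q\}\Big]=\mathbb{E}\Big[\tfrac1M\sum_{j}\mathbf{1}\{R_{i,j}\le Q\}\Big].
\]
Averaging over $i=1,\ldots,K+1$ yields
\[
\mathbb{E}\Big[\tfrac{1}{(K+1)M}\textstyle\sum_{i,j}\mathbf{1}\{R_{ij}\le Q\}\Big].
\]
By definition of the $r$-th order statistic, at least $r$ pooled scores are $\le Q$, and ties only increase this count. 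So the inner sum is deterministically at least $r$. Chaining Steps 1--4 gives the bound $r/((K+1)M)$ stated in Theorem~\ref{thm:coverage}.

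\textbf{Main obstacle.} The delicate step is Step 4: justifying that exchangeability of whole units suffices even though queries within a unit may be arbitrarily dependent. I will handle it by treating each $\mathbf{R}_i$ as a single $\mathbb{R}^M$-valued random element. Then $(\mathbf{R}_1,\ldots,\mathbf{R}_{K+1})\overset{d}{=}(\mathbf{R}_{\pi(1)},\ldots,\mathbf{R}_{\pi(K+1)})$ for every permutation $\pi$, and the functional $F_i(\mathbf{R}_1,\ldots,\mathbf{R}_{K+1})=\frac1M\sum_j\mathbf{1}\{R_{ij}\le Q\}$ satisfies $F_i=F_{K+1}\circ\pi$ for the transposition $\pi$ swapping $i$ and $K+1$. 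Only the query average is ever used, which is why no within-unit structure is required. I also need to verify that conditioning on the trained ensemble and the fixed inference mechanism (for example, shot noise, with fresh randomness per unit) preserves exchangeability of the score vectors. This is exactly what the remark after Assumption~\ref{ass:exch} asserts.
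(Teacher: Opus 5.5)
Your proof is correct and is essentially the paper's argument. Both lower-bound the threshold by the $r$-th smallest of the pooled $(K+1)M$ scores, use the deterministic fact that at least $r$ pooled scores lie at or below it, and transfer the averaged bound to the test unit through exchangeability of whole score vectors. The only difference is the order of operations: the paper symmetrizes the leave-one-unit-out coverages $V_i$ and needs $\widehat q_{-i}\ge R_{(r)}$ for every $i$, whereas you apply $\widehat q\ge Q$ once to the test unit and then symmetrize the simpler statistic $\frac{1}{M}\sum_{j}\mathbf{1}\{R_{ij}\le Q\}$, which slightly streamlines the exchangeability step.
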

	
	The expectation is marginal over the calibration and test units, conditional on the trained ensemble and the fixed inference mechanism. The quantity inside the expectation is the fraction of covered query coordinates for the test unit.
	
	\begin{proof}
		Index the $K$ calibration units and the test unit by $i\in\{1,\ldots,K+1\}$, with $i=K+1$ denoting the generic test unit, so that $R_{K+1,j}=R_{*,j}$. For the proof, pool their $(K+1)M$ scores and let $R_{(k)}$ denote the $k$-th smallest score in this augmented pool.
		\[
		R_{(1)}\leq\cdots\leq R_{((K+1)M)}.
		\]
		For each $i$, let $\widehat q_{-i}$ denote the leave-one-unit-out threshold obtained after removing the complete score vector $\mathbf{R}_i$ from this pool. When $i=K+1$, this is the calibration threshold $\widehat q$.
		
		\emph{Claim 1: each leave-one-unit-out threshold dominates the pooled $r$-th score.}
		
		Let
		\[
		R^{-i}_{(1)}\leq\cdots\leq R^{-i}_{(KM)}
		\]
		denote the remaining scores after removing $\mathbf{R}_i$. Because $\widehat q_{-i}$ interpolates between the $r$-th and $(r+1)$-th smallest remaining scores,
		\[
		\widehat q_{-i}\geq R^{-i}_{(r)}.
		\]
		Removing scores from the total $(K+1)M$ pool cannot decrease the $r$-th smallest value. Therefore,
		\[
		R^{-i}_{(r)}\geq R_{(r)},
		\]
		and hence
		\[
		\widehat q_{-i}\geq R^{-i}_{(r)}\geq R_{(r)}.
		\]
		
		\emph{Claim 2: the leave-one-unit-out covered fractions satisfy an average lower bound.}
		
		Define
		\[
		V_i=\frac{1}{M}\sum_{j=1}^{M}\mathbf{1}\{R_{ij}\leq\widehat q_{-i}\}
		\]
		for the fraction of unit $i$'s queries covered by its own threshold. The $r$ smallest scores in the total pool are all at most $R_{(r)}$, thus
		\[
		\frac{1}{K+1}\sum_{i=1}^{K+1}\frac{1}{M}\sum_{j=1}^{M}\mathbf{1}\{R_{ij}\leq R_{(r)}\}\geq\frac{r}{(K+1)M}
		\]
		Since $\widehat q_{-i} \geq R_{(r)}$ for every $i$ by Claim 1, we have
		\[
		\frac{1}{K+1}\sum_{i=1}^{K+1}\frac{1}{M}\sum_{j=1}^{M}\mathbf{1}\{R_{ij}\leq \widehat q_{-i}\}\geq\frac{r}{(K+1)M}
		\]
		and
		\[
		\frac{1}{K+1}\sum_{i=1}^{K+1}V_i\geq\frac{r}{(K+1)M}.
		\]
		
		\emph{Conclusion: exchangeability transfers the average bound to the test unit.}
		
		Define the mapping
		\[
		F(\mathbf{R}_1,\ldots,\mathbf{R}_{K+1})=(V_1,\ldots,V_{K+1}).
		\]
		For any permutation $\pi$ of the units,
		\[
		F(\mathbf{R}_{\pi(1)},\ldots,\mathbf{R}_{\pi(K+1)})=(V_{\pi(1)},\ldots,V_{\pi(K+1)}).
		\]
		This holds because each $V_i$ treats $\mathbf{R}_i$ as the held-out unit and uses the remaining score vectors without regard to their order.
		
		Because the score vectors are exchangeable,
		\[
		(\mathbf{R}_{\pi(1)},\ldots,\mathbf{R}_{\pi(K+1)})\overset{d}{=}(\mathbf{R}_1,\ldots,\mathbf{R}_{K+1}).
		\]
		Applying $F$ to both sides gives
		\[
		(V_{\pi(1)},\ldots,V_{\pi(K+1)})\overset{d}{=}(V_1,\ldots,V_{K+1}).
		\]
		Thus, the $V_i$'s are exchangeable. In particular,
		\[
		\mathbb{E}[V_1]=\cdots=\mathbb{E}[V_{K+1}].
		\]
		Taking expectations in Claim 2 gives
		\[
		\frac{1}{K+1}\sum_{i=1}^{K+1}\mathbb{E}[V_i]\geq\frac{r}{(K+1)M}.
		\]
		Because all terms in the sum have the same expectation,
		\[
		\frac{1}{K+1}\sum_{i=1}^{K+1}\mathbb{E}[V_i]=\frac{1}{K+1}\sum_{i=1}^{K+1}\mathbb{E}[V_{K+1}]=\mathbb{E}[V_{K+1}].
		\]
		Therefore,
		\[
		\mathbb{E}[V_{K+1}]\geq\frac{r}{(K+1)M}.
		\]
		By definition, and because $\widehat q_{-(K+1)}=\widehat q$,
		\[
		V_{K+1}=\frac{1}{M}\sum_{j=1}^{M}\mathbf{1}\{R_{K+1,j}\leq\widehat q_{-(K+1)}\}=\frac{1}{M}\sum_{j=1}^{M}\mathbf{1}\{R_{K+1,j}\leq\widehat q\}.
		\]
		The equivalence between score thresholding and interval coverage gives
		\[
		V_{K+1}=\frac{1}{M}\sum_{j=1}^{M}\mathbf{1}\left\{s_{*,j}\in C_\alpha(\mathbf{u}_{*},\mathbf{y}_{*,j})\right\}.
		\]
		Hence,
		\[
		\mathbb{E}\left[\frac{1}{M}\sum_{j=1}^{M}\mathbf{1}\left\{s_{*,j}\in C_\alpha(\mathbf{u}_{*},\mathbf{y}_{*,j})\right\}\right]=\mathbb{E}[V_{K+1}]\geq\frac{r}{(K+1)M},
		\]
		which proves the result.

		\end{proof}

	An equivalent probability interpretation is obtained by drawing $J$ uniformly from $\{1,\ldots,M\}$, independently of the calibration and test units. Conditional on the trained ensemble and the fixed inference mechanism,
	\[
	\mathbb{P}\left\{s_{*,J}\in C_\alpha(\mathbf{u}_*,\mathbf{y}_{*,J})\right\}=\mathbb{E}\left[\frac{1}{M}\sum_{j=1}^{M}\mathbf{1}\left\{s_{*,j}\in C_\alpha(\mathbf{u}_*,\mathbf{y}_{*,j})\right\}\right]\geq\frac{r}{(K+1)M}.
	\]
	
	To relate this bound to $1-\alpha$, write
	\[
	r=(KM-1)(1-\alpha)+\eta,\qquad 0<\eta\leq 1,
	\]
	where $\eta$ accounts for rounding. Then
	\[
	\frac{r}{(K+1)M}=(1-\alpha)\frac{K}{K+1}+\frac{\eta-(1-\alpha)}{(K+1)M}.
	\]
	Therefore,
	\[
	\lim_{K\rightarrow\infty}\frac{r}{(K+1)M}=1-\alpha.
	\]
	The difference from $1-\alpha$ is $\mathcal{O}(1/K)$, so the bound can already be close to the target for practical calibration-set sizes. Increasing $M$ alone does not remove the finite-unit gap: for fixed $K$, the bound approaches $(1-\alpha)K/(K+1)$ as $M\rightarrow\infty$.
	
	The guarantee applies to the expectation, or equivalently to the probability for a uniformly selected query. The covered fraction for a particular calibration and test realization may fall below the bound. The theorem also provides no nontrivial upper bound; the marginal coverage probability may be as large as one.

    \section{Experiments}
    \label{sec:experiments}

    \subsection{Setup and Evaluation Metrics}

	We evaluate predictive accuracy using relative $L_2$ error and uncertainty calibration using empirical query-averaged coverage, average interval width, and maximum interval width. For $N_{\mathrm{test}}$ test input--output function pairs, each evaluated at $M$ query coordinates, empirical query-averaged coverage is
	\[
	\widehat{\mathrm{Cov}}=\frac{1}{N_{\mathrm{test}}}\sum_{i=1}^{N_{\mathrm{test}}}\frac{1}{M}\sum_{j=1}^{M}\mathbf{1}\left\{s_{ij}\in C_\alpha(\mathbf{u}_i,\mathbf{y}_{ij})\right\}.
	\]
	For each test pair, we compute the relative $L_2$ error of the ensemble mean prediction across its $M$ query coordinates and then average over all test pairs:
	\[
	\frac{1}{N_{\mathrm{test}}}\sum_{i=1}^{N_{\mathrm{test}}}
	\frac{\left(\sum_{j=1}^{M}\left(\mu(\mathbf{u}_i)(\mathbf{y}_{ij})-s_{ij}\right)^2\right)^{1/2}}
	{\left(\sum_{j=1}^{M}s_{ij}^{2}\right)^{1/2}}.
	\]

	Across all experiments, the branch and trunk QOrthoNNs use the same hidden width and depth. The first branch layer projects the $(d_u+1)$-dimensional slack-augmented input down to the reported width, whereas the first trunk layer projects its lower-dimensional query representation up to that width.

    Direct state-vector simulation becomes impractical for the larger circuits considered here. For ideal evaluations, we therefore use the mathematically equivalent classical Orthogonal Neural Network, which reproduces the noiseless QOrthoNN transformation without simulating the full quantum state. Because of this exact equivalence, we refer to these direct classical evaluations as ideal simulations.

    Beyond these ideal baselines, we evaluate the framework under varying degrees of quantum noise. To conduct hyperparameter ablations efficiently, we first use a gate-based depolarizing model with finite-shot sampling. For single- and two-qubit gates acting on a $k$-qubit subsystem $A$ (where $k \in \{1, 2\}$), the local depolarizing error on the quantum state $\rho$ is modelled as:
    \[\mathcal{E}_A(\rho) = (1-\lambda)\rho + \lambda \left( \frac{I_A}{2^k} \otimes \operatorname{Tr}_A[\rho] \right)\]
    with noise strength $\lambda$ scaled by 0.8 for two-qubit operations to equalize error rates across 1 and 2-qubit depolarizing noise channels. We evaluate circuits under this model by transpiling them to IBM's Eagle basis gate set $\{\mathrm{ECR}, \mathrm{R}_z, \mathrm{SX}, \mathrm{X}, \mathrm{I}\}$ \citep{abughanem2025eagle, xiao2025qdon}, assuming nearest-neighbour connectivity. For circuit-depth comparisons, we also transpile the circuits to IBM's current Heron basis gate set $\{\mathrm{CZ},\mathrm{I},\mathrm{R}_x,\mathrm{R}_z,\mathrm{R}_{zz},\mathrm{SX},\mathrm{X}\}$ and report depths for both architectures \citep{IBMQuantumComputers}. To accelerate simulation, we apply multinomial sampling to the evolved density matrix; Appendix~\ref{sec:multinomial} shows close agreement with full per-shot simulation. We further evaluate selected compact circuits using device-calibrated Qiskit Aer noise models that combine readout error, depolarizing error, and thermal relaxation \citep{qiskit2024}. For this validation, circuits are transpiled according to the reported calibration parameters, basis gates, and topological coupling maps of three representative quantum processing units: IBM Brisbane (127-qubit Eagle r3), IBM Torino (133-qubit Heron r1), and IBM Marrakesh (156-qubit Heron r2) \citep{IBMQuantumComputers}. The snapshots of calibration data used in our experiments can be found in Appendix~\ref{sec:snapshots}.

	In these noisy simulations, we use unary-subspace post-selection to mitigate quantum noise. Because QOrthoNN data are encoded in the Hamming-weight-one subspace, we discard tomography outcomes whose data-register bits have any other Hamming weight. This removes errors that leave the unary subspace, at the cost of discarding shots.

	Dataset split sizes and training hyperparameters for all experiments are reported in Appendix~\ref{sec:train_hp}.

    \subsection{Synthetic Benchmarks}

    \begin{table}[t]
        \caption{Summary of model performance across synthetic experiments under ideal simulation with 90\% target coverage. The metrics from left to right are Relative $L_2$ Error (\%), Empirical Coverage (\%), Average Width, and Maximum Interval Width.}
        \label{tab:results_summary_syn}
        \begin{center}
            \setlength{\tabcolsep}{5pt}
            \begin{tabular}{lcccc}
                \hline
                
                \textbf{Experiment} & \textbf{Rel. $L_2$ Error (\%)} & \textbf{Coverage (\%)} & \textbf{Avg. Width} & \textbf{Max. Width} \\
                \hline

                Antiderivative (Ens. Size = 4) & 0.46 & 88.40 & 0.004 & 0.044 \\
                Antiderivative (Ens. Size = 8) & 0.46 & 92.13 & 0.005 & 0.080 \\
                Advection (Ens. Size = 4)      & 2.38 & 89.36 & 0.062 & 0.751 \\
                Advection (Ens. Size = 8)      & 2.28 & 88.99 & 0.053 & 0.621 \\

                \hline
            \end{tabular}
        \end{center}
    \end{table}

    \begin{table}[t]
        \caption{Architecture and hardware specifications for the quantum DeepONet ensembles for the synthetic benchmarks. Width denotes the number of neurons per layer. Params and circuit depth report the maximum per-layer values for total RBS parameters and transpiled PQC depth (on IBM Heron/Eagle), respectively. All models use the SiLU activation function. (Config: R = ResNet, F = Fourier features). The number of qubits used for a single model is $d_u + 2$ (accounting for the $d_u$ raw inputs, one appended slack coordinate, and one measurement ancilla).}
        \label{tab:model_architectures_syn}
        \begin{center}
            \begin{tabular}{lccccccccc}
                \hline
                \textbf{Experiment} & $d_u$ & $d_y$ & \textbf{Layers} & \textbf{Width} & \textbf{Params} & \textbf{Heron} & \textbf{Eagle} & \textbf{Config} & \textbf{Ens. Size} \\
                \hline

                Antiderivative & 10 & 1 & 2 & 10 & 55  & 165 & 321 & -- & 4/8 \\
                Advection      & 20 & 2 & 7 & 20 & 210 & 305 & 601 & R  & 4/8 \\

                \hline
            \end{tabular}
        \end{center}
    \end{table}

    \begin{figure}[t]
        \begin{center}
            \includegraphics[width=\columnwidth]{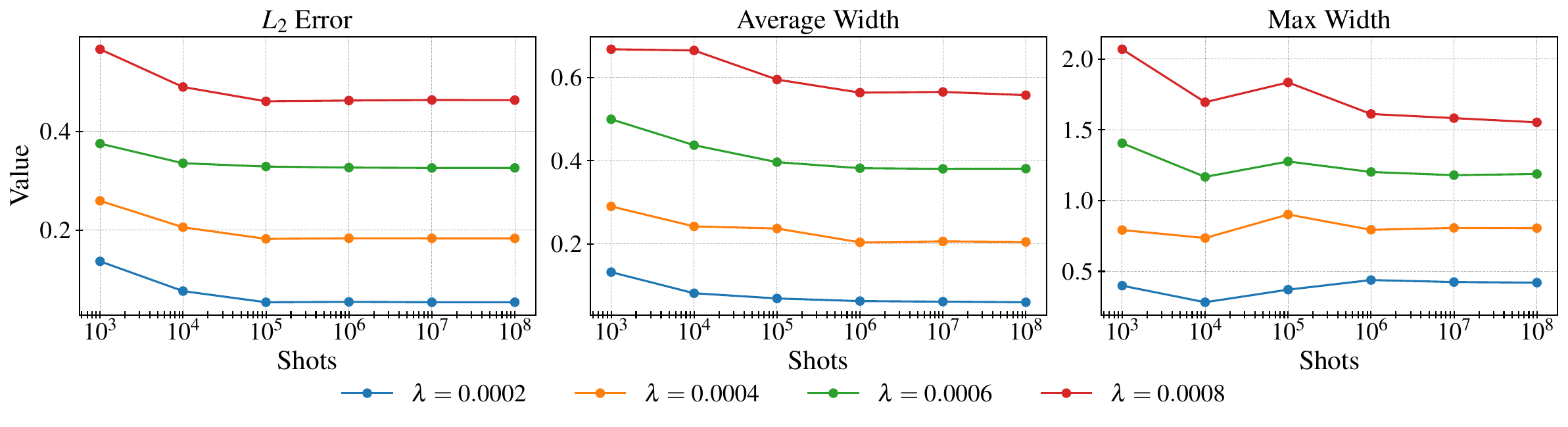}
        \end{center}

        \caption{Effects of depolarizing noise strength $\lambda$ and shot count on antiderivative prediction performance. The panels report relative $L_2$ error (left), average interval width (middle), and maximum interval width (right). Error and interval widths generally decrease with lower noise strength and stabilize as the shot count increases.}

        \label{fig:simplified_anti}
    \end{figure}

    \subsubsection{Antiderivative Operator}

    We first evaluate our framework on a synthetic antiderivative operator $\mathcal{A}$, which maps an input function $u(x)$ on $x\in[0,1]$ to its integral, defined by $\frac{d}{dx}\mathcal{A}(u)(x)=u(x)$ subject to $\mathcal{A}(u)(0)=0$. To generate the dataset, we sample smooth input functions from a Gaussian random field (GRF) prior, $u\sim\mathcal{GP}(0,k_l)$. The covariance kernel is $k_l(x,x')=\exp\left(-\frac{d(x,x')^2}{2l^2}\right)$, where $d(\cdot,\cdot)$ denotes the distance between input coordinates and the length scale $l$ controls the smoothness of the sampled functions.

    For this task, we deploy an ensemble of $L=8$ independent models, using a 2-layer, 10-neuron QOrthoNN architecture. As detailed in Table~\ref{tab:model_architectures_syn}, this model requires 55 RBS rotation parameters in its largest quantum layer and transpiles to a maximum circuit depth of 165 on the IBM Heron topology and 321 on IBM Eagle. Under ideal noiseless simulation, this quantum ensemble achieves a 0.46\% relative $L_2$ error and 92.13\% empirical coverage for a target rate of $1-\alpha=0.90$.

	As an illustrative calculation, the antiderivative experiment has $K=50$, $M=30$, and $\alpha=0.1$, for which Theorem~\ref{thm:coverage} gives the finite-sample lower bound $r/((K+1)M)=88.24\%$. The same calculation applies to the other experiments using their respective values of $K$, $M$, and $\alpha$.

    We further studied the effects of depolarizing noise on the approximation of the antiderivative operator with $L=8$ ensemble members each composed of the same QOrthoNN architecture described above. As shown in Figure~\ref{fig:simplified_anti}, prediction errors and interval widths generally decrease as the depolarizing-noise strength decreases and the shot count increases.

    \subsubsection{Advection Equation}

    To evaluate the framework on a dynamical system, we next consider the 1D advection equation, $\frac{\partial u}{\partial x} + \frac{\partial u}{\partial t} = 0$ for $x, t \in [0, 1]$, subject to periodic boundary conditions. Here, the operator learns the solution mapping from the initial condition $u(x,0)$ to the full spatiotemporal solution field $u(x,t)$. To enforce periodicity, we sample the initial conditions from a Gaussian random field with a unit-period Exp-Sine-Squared kernel, $k_l(x,x')=\exp\left(-2\sin^2(\pi d(x,x'))/l^2\right)$.

    Because advection demands tracking features across spacetime, we deploy a deeper, 7-layer QOrthoNN architecture featuring residual connections to stabilize gradient flow. This model processes 20 uniformly sampled spatial locations for the branch input and a Cartesian product of 50 spatial and 50 temporal coordinates, giving 2,500 query points. Consequently, this expanded architecture requires 210 RBS rotation parameters in its largest quantum layer, transpiling to a maximum circuit depth of 305 on IBM Heron and 601 on IBM Eagle. Under ideal simulation, the 8-model advection ensemble achieves a 2.28\% relative $L_2$ error and 88.99\% empirical coverage for a target rate of $1-\alpha=0.90$. With $K=200$, $M=2500$, and $\alpha=0.1$, Theorem~\ref{thm:coverage} gives a finite-sample lower bound of $89.55\%$ for this experiment. Because the theorem bounds an expectation rather than a single calibration and test realization, the coverage reported here falls slightly below that value. Table~\ref{tab:results_summary_syn} summarizes the predictive accuracy and coverage of both ensemble sizes. We next examine how these metrics change under device-calibrated composite noise simulations.

    \subsection{Empirical Coverage Under Device-Calibrated Noise Models}

    To evaluate robustness under device-calibrated composite noise simulations, we measure coverage for the antiderivative ensemble under noise models calibrated to IBM Marrakesh, IBM Torino, and IBM Brisbane. As shown in Figure~\ref{fig:cov_guar}, the observed coverage remains near or above the nominal $1-\alpha=0.90$ target across measurement budgets from 5,000 to 100,000 shots. These results are consistent with stable calibration and test conditions, but empirical coverage alone cannot verify exchangeability. The finite-sample result in Section~\ref{sec:uq} applies when the calibration and test input--output function pairs are exchangeable and their predictions are generated under the same stationary inference and hardware-noise mechanism.

	\begin{figure}[t]
		\begin{center}
			\includegraphics[width=\columnwidth]{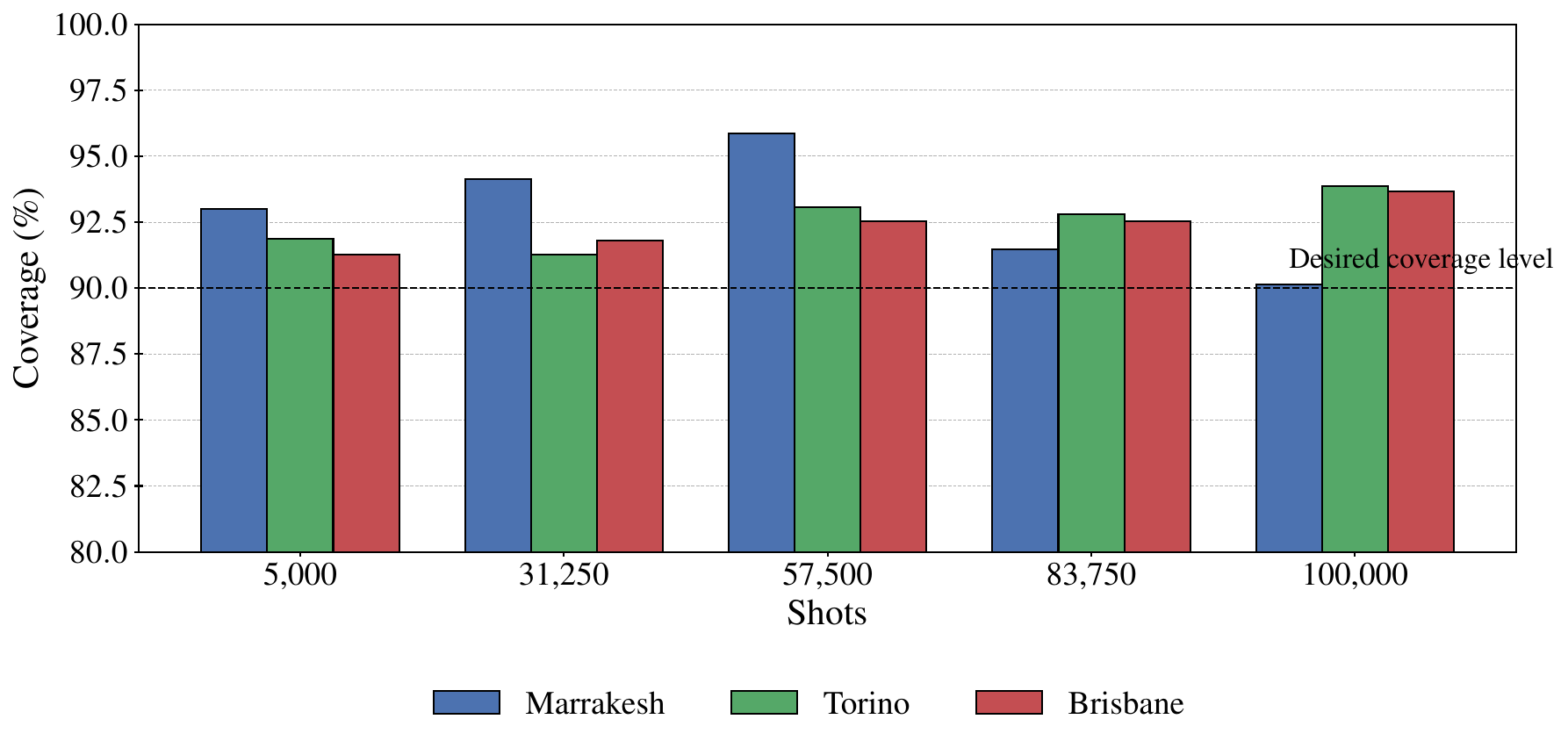}
		\end{center}
		
		\caption{Coverage on the antiderivative test set under noise models calibrated to IBM Marrakesh, IBM Torino, and IBM Brisbane. The dashed line denotes the nominal 90\% target.}
		
		\label{fig:cov_guar}
	\end{figure}

    \subsection{Real Power-Engineering Datasets}

    In power-system applications, distinct operational settings motivate three operator classes. Each input function is a measured system trajectory, and each output function is the corresponding response trajectory. We therefore use ``input trajectory'' and ``output trajectory'' in place of input and output functions throughout this section and the following experimental subsections.
    
    The first is the Transient-to-Transient Operator ($\mathcal{F}$), which maps an entire temporal input trajectory directly to an entire output trajectory. Because this global mapping evaluates the complete dynamic evolution simultaneously, it is essential for offline applications such as post-mortem event analysis and long-term system planning \citep{kwatny2016psdynamics, kundur2004psdynamics}. Formally, this operator functions across the Hilbert spaces $\mathcal{U} = L^2([0,T]; \mathbb{R}^d)$ and $\mathcal{Y} = L^2([0,T]; \mathbb{R}^d)$, where $d$ represents the state-space dimensionality. Given a dataset of observed input--output trajectory pairs, $\{(\mathbf{u}_i, \mathbf{s}_i)\}_{i=1}^N$ where $\mathbf{s}_i = \mathcal{F}[\mathbf{u}_i]$, the objective is to learn a quantum surrogate, $\hat{\mathcal{F}}$.

    The second class is the Causal Forecasting Operator $(\mathcal H)$, which supports proactive grid management strategies such as long-range dynamic stability assessments. Unlike the global mapping of the transient operator, $\mathcal H$ explicitly enforces causality by mapping past pre-event histories to future post-event predictions. Formally defined as $\mathcal H: \mathcal U_{\mathrm{past}} \to \mathcal Y_{\mathrm{future}}$, this operator processes historical input trajectories from the space $\mathcal{U}_{\mathrm{past}} = L^2([t_0 - \tau, t_0]; \mathbb{R}^d)$ to predict the system's future dynamic evolution within the space $\mathcal{Y}_{\mathrm{future}} = L^2([t_0, t_0 + T_f]; \mathbb{R}^d)$, where $t_0$ denotes the time of a critical grid contingency and $d$ represents the physical state-space dimensionality.

    The final class is the Causal Pointwise Transient Operator $(\mathcal P)$, designed for real-time applications such as autonomous grid monitoring and closed-loop control. Mathematically, this is formulated as a time-indexed family of operators, $\{\mathcal{P}_t\}_{t \in [0,T]}$. Each specific operator, $\mathcal{P}_t : L^2([0,t]; \mathbb{R}^d) \to \mathbb{R}^d$, maps a historical input trajectory restricted to the current time $[0,t]$ to a single, localized state prediction at the immediate future step, $t + \Delta t$. However, because integrating a continuously expanding historical domain from $t=0$ is computationally prohibitive for high-frequency real-time inference, we approximate this mapping using a fixed memory window, $\tau > 0$. We therefore truncate the evaluation to a sliding window: $\mathcal{P}_t[u] \approx \mathcal{P}_t\left[u|_{[t-\tau,t]}\right]$.
    
    Across these three operator classes, we construct prediction sets at the corresponding output coordinates and evaluate their coverage. Both offline tasks augment the scalar temporal query $t$ with Fourier features to mitigate spectral bias from transient dynamics. For each task, we extract the five dominant frequencies from the training outputs using an FFT with a Hanning window and map $t$ to
    \[
    \mathbf{y}(t) = [t, \cos(2\pi f_1 t), \sin(2\pi f_1 t), \dots, \cos(2\pi f_5 t), \sin(2\pi f_5 t)]^{\top} \in \mathbb{R}^{11}.
    \]
    
        We now evaluate the framework on the corresponding power-engineering tasks, with the model architectures summarized in Table~\ref{tab:model_architectures_all} and the resulting accuracy and calibration metrics in Table~\ref{tab:results_summary_all}.

    \begin{table}[t]

        \caption{Summary of model performance across offline and online experiments under ideal simulation. The target coverage for all uncertainty quantification tasks is 90\%.}
        \label{tab:results_summary_all}

        \begin{center}
            \begin{tabular}{lcccc}
                \hline
                \textbf{Experiment} & \textbf{Rel. $L_2$ Error (\%)} & \textbf{Coverage (\%)} & \textbf{Avg. Width} & \textbf{Max. Width} \\
                \hline

                Offline V-to-P & 4.08  & 89.74 & 0.001 & 0.063 \\
                Offline V-to-V & 12.49 & 89.66 & 0.345 & 2.691 \\
                Online V-to-V  & 5.49  & 90.11 & 0.148 & 1.328 \\

                \hline
            \end{tabular}
        \end{center}

    \end{table}

    \begin{table}[t]

        \caption{Architecture and hardware specifications for the quantum DeepONet ensembles across offline and online tasks. Width denotes the number of neurons per layer. Params and circuit depth report the maximum per-layer values for total RBS parameters and transpiled PQC depth (on IBM Heron/Eagle), respectively. All models use the SiLU activation function. (Config: R = ResNet, F = Fourier features). The number of qubits used for a single model is $d_u + 2$ (accounting for the $d_u$ raw inputs, one appended slack coordinate, and one measurement ancilla).}
        \label{tab:model_architectures_all}

        \begin{center}
            \begin{tabular}{lccccccccc}
                \hline
                \textbf{Experiment} & $d_u$ & $d_y$ & \textbf{Layers} & \textbf{Width} & \textbf{Params} & \textbf{Heron} & \textbf{Eagle} & \textbf{Config} & \textbf{Ens. Size} \\
                \hline

                Offline V-to-P & 20 & 11 & 6 & 20 & 210 & 305 & 601 & R, F & 8 \\
                Offline V-to-V & 20 & 11 & 6 & 20 & 210 & 305 & 601 & R, F & 8 \\
                Online V-to-V  & 5  & 1 & 2 & 5  & 15  & 94  & 181 & --   & 4 \\

                \hline
            \end{tabular}
        \end{center}

    \end{table}

    \subsubsection{Offline Voltage-to-Active-Power}

    We first evaluate the Transient-to-Transient Operator $(\mathcal F)$ on an offline cross-domain task, mapping a 10-second voltage trajectory to its corresponding active power trajectory. The voltage input is discretized into a branch vector $\mathbf{u}\in \mathbb R^{20}$, while the active power output is queried across 100 Fourier-augmented trunk coordinates. Because the active-power trajectories contain high-frequency transients, we smooth the targets using a zero-phase forward--backward fourth-order Butterworth low-pass filter with cutoff frequency $f_c=f_s/100$. The network therefore learns the operator mapping the voltage trajectory to the low-pass-filtered active-power trajectory.

    For this task, we deploy an $L=8$ ensemble of 6-layer, 20-neuron QOrthoNNs with residual connections. This configuration uses a maximum of 210 RBS rotation parameters in a quantum layer and transpiles to maximum circuit depths of 305 on IBM Heron and 601 on IBM Eagle. Under ideal simulation, the ensemble achieves a 4.08\% relative $L_2$ error and 89.74\% coverage, close to the nominal target of 90\%.

    \subsubsection{Offline Voltage-to-Voltage}

    We next test the Causal Forecasting Operator $(\mathcal H)$ on an offline voltage prediction task, mapping a 1.9-second pre-fault history to a 2.0-second post-fault trajectory. The input history is discretized into a branch vector $\mathbf{u}\in \mathbb R^{20}$, while the forecast horizon is queried across 100 uniformly spaced, Fourier-augmented trunk coordinates.

    This task uses the same $L=8$ ensemble architecture and circuit resources as the preceding voltage-to-active-power benchmark. Under ideal simulation, the framework achieves a 12.49\% relative $L_2$ error and 89.66\% coverage, close to the nominal target of 90\%.

    \subsubsection{Online Voltage-to-Voltage}

	Our final power-system benchmark evaluates the Causal Pointwise Transient Operator $(\mathcal P)$ on online voltage prediction. Given a fixed window of recent voltage measurements, the operator predicts the voltage at the next time step. 
	
	Unlike the preceding offline tasks, this online task does not consist of one input trajectory evaluated at multiple output coordinates. Instead, each parent trajectory generates a sequence of sliding-window input--output examples. For parent trajectory $i$, $\mathbf{u}_{ij}$ is the input window for example $j$, $\mathbf{y}_{ij}$ is its query coordinate, and $s_{ij}$ is the corresponding next-step target.
	
	Windows from the same parent trajectory may overlap and are therefore strongly dependent. We split the data by complete parent trajectories, so all windows from one trajectory belong to only one of the training, calibration, and test sets. We define the exchangeable unit as the trajectory-level block
	\[
	\mathcal{B}_i=\{(\mathbf{u}_{ij},\mathbf{y}_{ij},s_{ij})\}_{j=1}^{M}.
	\]
	
	For each block, define
	\[
	R_{ij}=\frac{|s_{ij}-\mu(\mathbf{u}_{ij})(\mathbf{y}_{ij})|}{\tilde\sigma(\mathbf{u}_{ij})(\mathbf{y}_{ij})},\qquad \mathbf{R}_i=(R_{i1},\ldots,R_{iM}).
	\]
	We pool the scores from the $K$ calibration blocks and compute $\widehat q$ using the calibration rule in Section~\ref{sec:uq}. Although Theorem~\ref{thm:coverage} is stated using a common input function within each unit, its proof requires only jointly exchangeable score vectors of equal length. Conditional on the trained ensemble and inference mechanism, we assume that the calibration and test blocks are jointly exchangeable. Because every block contains the same number $M$ of windows and the same scoring rule is applied to every block, the proof applies directly without requiring exchangeability of the windows within a block.

	\begin{figure}[t]
		
		\begin{center}
			\includegraphics[width=\columnwidth]{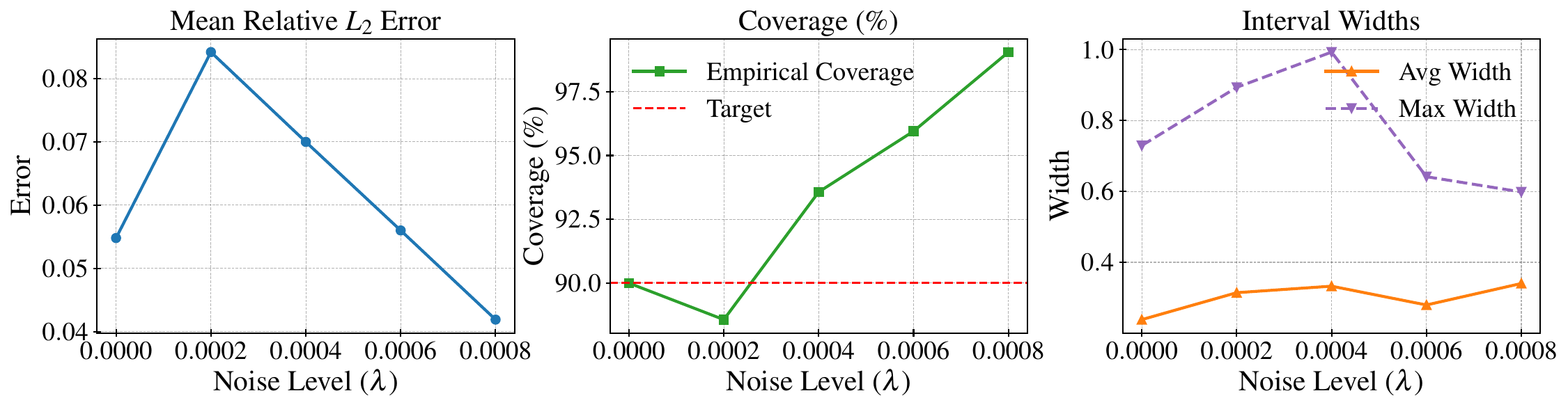}
		\end{center}
		
		\caption{Impact of depolarizing noise strength $\lambda$ on online voltage-to-voltage prediction performance with $10^5$ shots. The panels report mean relative $L_2$ error (left), empirical coverage (middle), and average and maximum interval widths (right).}
		
		\label{fig:online_analysis}
	\end{figure}
	
	\begin{figure}[t]
		
		\begin{center}
			\includegraphics[width=\columnwidth]{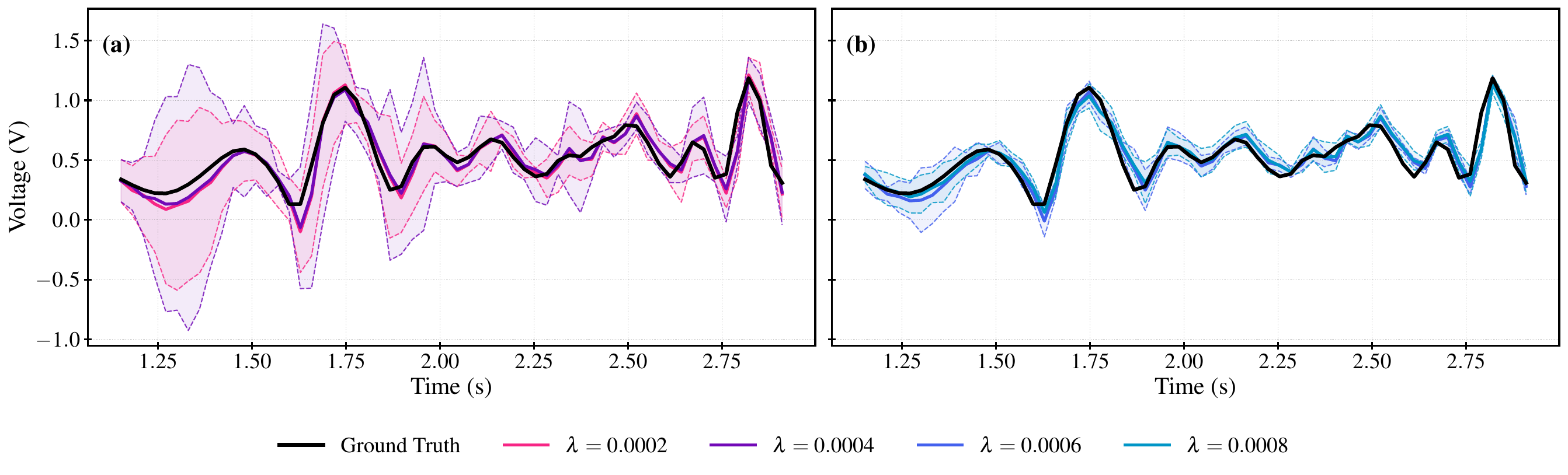}
		\end{center}
		
		\caption{Prediction intervals for one high-variance online voltage test trajectory under four depolarizing noise strengths. Panel (a) shows $\lambda\in\{0.0002,0.0004\}$, and panel (b) shows $\lambda\in\{0.0006,0.0008\}$ on the same vertical scale. The ground truth is shown in black; solid coloured lines denote ensemble means, and the corresponding dashed bounds and shaded regions denote the calibrated prediction intervals.}
		
		\label{fig:online_pred}
	\end{figure}

	Thus, for a new test trajectory block $\mathcal{B}_*=\{(\mathbf{u}_{*,j},\mathbf{y}_{*,j},s_{*,j})\}_{j=1}^{M}$ that is jointly exchangeable with the $K$ calibration blocks, Theorem~\ref{thm:coverage} gives
	\[
	\mathbb{E}\left[\frac{1}{M}\sum_{j=1}^{M}\mathbf{1}\left\{s_{*,j}\in C_\alpha(\mathbf{u}_{*,j},\mathbf{y}_{*,j})\right\}\right]\geq\frac{r}{(K+1)M},\qquad r=\left\lfloor(KM-1)(1-\alpha)\right\rfloor+1.
	\]
	The guarantee therefore applies to the expected fraction of covered next-step targets across the windows of a new trajectory. For $N_{\mathrm{test}}$ test trajectories, reported coverage is
	\[
	\widehat{\mathrm{Cov}}_{\mathrm{online}}=\frac{1}{N_{\mathrm{test}}}\sum_{i=1}^{N_{\mathrm{test}}}\frac{1}{M}\sum_{j=1}^{M}\mathbf{1}\left\{s_{ij}\in C_\alpha(\mathbf{u}_{ij},\mathbf{y}_{ij})\right\}.
	\]
	For the online task, each sliding window has a scalar target, so we compute the mean scalar relative $L_2$ error across all test windows:
	\[
	\frac{1}{N_{\mathrm{test}}M}\sum_{i=1}^{N_{\mathrm{test}}}\sum_{j=1}^{M}
	\frac{\left|\mu(\mathbf{u}_{ij})(\mathbf{y}_{ij})-s_{ij}\right|}
	{\left|s_{ij}\right|}.
	\]

    We evaluate the Causal Pointwise Transient Operator $(\mathcal{P})$ using windows consisting of five recent voltage measurements. The $L=4$ ensemble consists of 2-layer, 5-neuron QOrthoNNs, requiring 15 RBS rotation parameters in its largest quantum layer and a maximum circuit depth of 94 on IBM Heron. Under ideal simulation, the ensemble achieves a 5.49\% relative $L_2$ error and 90.11\% empirical query-averaged coverage relative to the nominal 90\% target.

 	The short input windows and shallow network produce compact circuits, making depolarizing-noise simulation tractable for individual inputs. However, each trajectory contributes many windows. Training on the complete window set and simulating noisy inference for every calibration and test window were outside our computational budget. We therefore use reduced datasets for the noise ablations.
 	
  	For the aggregate ablation in Figure~\ref{fig:online_analysis}, we draw reduced samples separately from the disjoint training, calibration, and test sets. For the qualitative ablation in Figure~\ref{fig:online_pred}, we restrict the data to high-variance signals and visualize one test trajectory to show how the intervals vary over time. Because the two ablations use different data selection and training procedures, we interpret them separately.

  	The mean relative $L_2$ error first increases from $\lambda=0$ to $\lambda=0.0002$, then decreases as $\lambda$ increases and falls below its noiseless value at $\lambda=0.0008$. Over the same range, empirical coverage dips slightly below the target at $\lambda=0.0002$ and then rises monotonically to approximately 99\%. This increase is not accompanied by systematic interval widening: the average width remains approximately constant across the nonzero noise levels, while the maximum width decreases at the two largest noise levels. For the selected high-variance trajectory, larger noise levels also produce narrower intervals. These trends suggest a possible noise-induced regularization effect, but the evidence is preliminary. We do not investigate its mechanism or whether hardware noise can systematically improve calibrated uncertainty estimates.

    \subsection{Hybrid Architecture}

	\begin{figure}[t]
		
		\begin{center}
			\includegraphics[width=\columnwidth]{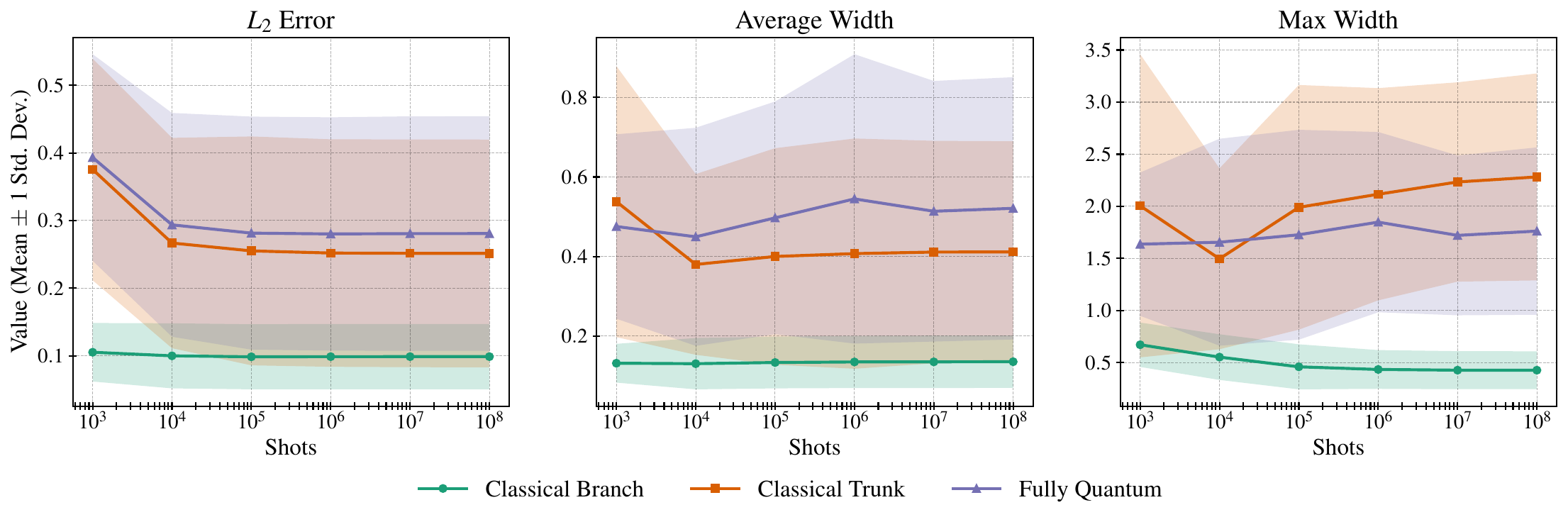}
		\end{center}
		
		\caption{Performance comparison of hybrid classical--quantum architectures on the antiderivative operator task. We evaluate three ensemble configurations: Fully Quantum (purple), Hybrid with Classical Trunk (orange), and Hybrid with Classical Branch (green). At each shot count, the lines report the mean across the four depolarizing noise strengths $\lambda\in\{0.0002,0.0004,0.0006,0.0008\}$, and the shaded regions denote $\pm1$ standard deviation across these noise strengths.}
		
		\label{fig:hybrid_arch}
	\end{figure}

    We compare fully quantum, classical-branch, and classical-trunk configurations on the antiderivative task as a proof-of-concept for hybrid ensemble execution. Although the branch and trunk have the same width and depth, the branch processes the higher-dimensional discretized input function and outputs a representation that is used across all trunk queries. As illustrated in Figure~\ref{fig:hybrid_arch}, the classical-branch hybrid achieves the lowest relative $L_2$ error and the narrowest prediction bands across the evaluated shot counts and noise strengths by removing quantum noise from this shared representation. In contrast, the classical-trunk hybrid improves only marginally on error and average width and increases the maximum interval width at larger shot counts, because it retains the noisy quantum branch. For this task, these results suggest that noise in the shared branch representation is the dominant source of degradation.

    \subsection{SPQC-Inspired Multiplexed Ensemble Execution}
    
    We evaluate the SPQC-inspired multiplexed circuit as a compact proof-of-concept for reducing the qubit duplication of parallel ensembling while retaining predictive accuracy and uncertainty quantification. We test the multiplexed circuit on the antiderivative task using an $L=4$ ensemble of 2-layer, 5-neuron QOrthoNNs. For the first branch layer, which maps the five raw inputs and one appended slack coordinate to width 5, the multiplexed circuit uses 9 qubits---six data qubits, one tomography ancilla, and two address qubits---compared with 28 qubits for naive parallel execution, where each of the four ensemble members uses six data qubits and one tomography ancilla. We restrict this evaluation to compact circuits with $L=4$ because noisy simulation becomes computationally expensive for larger circuits and ensembles. As illustrated in Figure~\ref{fig:SPQC_comparison}, multiplexed execution closely tracks the separately executed ensemble in relative $L_2$ error and average and maximum interval widths across the evaluated noise levels. The standard and multiplexed ensembles maintain empirical coverage near the nominal 90\% target across all evaluated noise levels.
    
    Table~\ref{tab:spqc_vs_normal_gates} reports the corresponding gate decomposition after transpilation to the IBM Heron topology. For $L=4$, the multiplexed circuit increases circuit depth and its CZ-gate count relative to one standard ensemble member while representing all four members in a shared circuit. Together with the qubit counts above, these results illustrate the qubit--depth trade-off in this compact setting; they do not constitute physical hardware validation or establish depth, sampling, or runtime scaling at larger ensemble sizes.

    \begin{figure}[t]

        \begin{center}
            \includegraphics[width=\columnwidth]{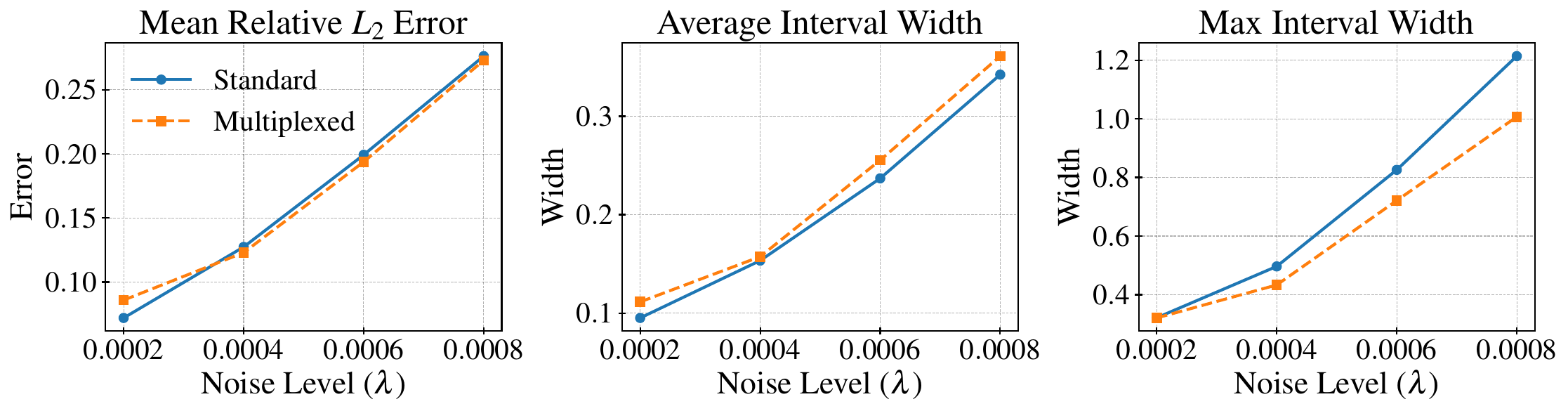}
        \end{center}

        \caption{Comparison of separately executed standard ensembling and SPQC-inspired multiplexed execution under depolarizing noise $\lambda$. Each standard ensemble member receives $10^5$ shots; the multiplexed circuit receives $10^5$ shots distributed across its four address branches. The multiplexed circuit closely tracks the standard ensemble across the reported metrics.}

        \label{fig:SPQC_comparison}
    \end{figure}

    \begin{table}[t]

        \caption{Transpiled circuit resources for a single quantum layer of one standard ensemble member and the corresponding SPQC-inspired multiplexed circuit with $L=4$. The Layers and Width columns describe the underlying QOrthoNN architecture; circuit depth and gate counts refer to the transpiled layer circuits.}
        \label{tab:spqc_vs_normal_gates}

        \begin{center}
            \begin{tabular}{lcccccccccc}
                \hline
                \textbf{Mode} & $d_u$ & $d_y$ & \textbf{Layers} & \textbf{Width} & \textbf{Depth} & \textbf{CZ} & \textbf{RX} & \textbf{RZ} & \textbf{SX} & \textbf{X} \\
                \hline

                Standard    & 5 & 1 & 2 & 5 & 95  & 62  & 76  & 62  & 21 & 1 \\
                Multiplexed & 5 & 1 & 2 & 5 & 272 & 173 & 144 & 125 & 44 & 1 \\

                \hline
            \end{tabular}
        \end{center}

    \end{table}

    \section{Limitations and Outlook}
    \label{sec:limitations}

    Our complexity analysis applies to square QOrthoNN hidden transformations under an idealized, depth-based execution model. It is therefore a layer-level asymptotic comparison rather than an end-to-end runtime result or a measured hardware speedup. The empirical evaluation is likewise based on ideal and noisy circuit simulations rather than physical quantum hardware. Because classical noisy-circuit simulation becomes expensive as circuit and ensemble size increase, the experiments are restricted to compact networks and small ensembles.
    
    Conditional on the trained ensemble and a fixed inference mechanism, the conformal guarantee requires jointly exchangeable calibration and test units with equal-length score vectors. The online experiment treats complete parent trajectories as exchangeable units and therefore does not provide a streaming guarantee for successive windows from a single continuously evolving trajectory. Moreover, the finite-sample lower bound may lie below the nominal $1-\alpha$ target when the number of calibration units is small, although it approaches the target as the calibration set grows.
    
    Finally, the empirical trends observed under depolarizing noise are preliminary and specific to the evaluated tasks, circuit sizes, and noise models. They do not establish a general noise-induced regularization mechanism or imply an improvement in the theoretical coverage bound.

	Within these limitations, our results show that resource-aware analysis of QOrthoNN hidden layers, ensemble-based uncertainty estimation, and finite-sample conformal calibration can be combined within a single operator-learning framework. The analysis identifies the computational regime in which favourable QOrthoNN circuit-depth scaling yields an asymptotic hidden-layer running-time advantage. The conformal result provides a finite-sample query-averaged marginal coverage guarantee, while the experiments demonstrate accurate predictions and empirical coverage near the target under ideal evaluation and selected noisy circuit simulations. The hybrid and SPQC-inspired experiments are secondary proof-of-concept studies of ensemble execution and do not resolve the trade-off between the runtime cost of sequential execution and the qubit cost of fully parallel execution. Evaluation on larger circuits and physical hardware, together with calibration methods for streaming data, distribution shift, and temporally varying hardware conditions, represents a natural next step toward scalable, uncertainty-aware quantum operator learning.

    \section*{Code and Data Availability}

    The code supporting the findings of this study is publicly available in the GitHub repository: \href{https://github.com/purav-0000/conformalized-quantum-deeponet}{https://github.com/purav-0000/conformalized-quantum-deeponet}. The repository also contains the data and scripts required to reproduce the experiments and results presented in this work.

    \section*{Acknowledgements}
    We acknowledge support from the National Science Foundation (DMS-2533878, DMS-2053746, DMS-2134209, ECCS-2328241, CBET-2347401, and OAC-2311848), the U.S.~Department of Energy (DOE) Office of Science Advanced Scientific Computing Research program (DE-SC0023161), the SciDAC LEADS Institute, and DOE Fusion Energy Sciences (DE-SC0024583).

    \bibliographystyle{tmlr}
    \bibliography{main}

    \appendix
    \section*{Appendix}

    \section{Detailed QOrthoNN Quantum State Evolution}
    \label{sec:quantum_state_evolution}

    For an $m\times n$ transformation, let $q=\max(m,n)$. Define $\mathbf{E}_n\in\mathbb{R}^{q\times n}$ to embed an input into the bottom $n$ coordinates,
    \[
    \mathbf{E}_n\mathbf{x}=(\underbrace{0,\ldots,0}_{q-n},x_1,\ldots,x_n)^{\top},
    \]
    and define $\boldsymbol{\Pi}_m\in\mathbb{R}^{m\times q}$ to retain the bottom $m$ coordinates,
    \[
    \boldsymbol{\Pi}_m\mathbf{v}=(v_{q-m+1},\ldots,v_q)^{\top}.
    \]
    The effective transformation implemented by the layer is $\mathbf{W}=\boldsymbol{\Pi}_m\mathbf{Q}(\boldsymbol{\theta})\mathbf{E}_n\in\mathbb{R}^{m\times n}$.
    When $m\geq n$, $\mathbf{W}^{\top}\mathbf{W}=\mathbf{I}_n$, and when $m\leq n$, $\mathbf{W}\mathbf{W}^{\top}=\mathbf{I}_m$. Following the QOrthoNN literature, we refer to both cases as orthogonal transformations.
    
    Here, $\mathbf{Q}(\boldsymbol{\theta})\in\mathbb{R}^{q\times q}$ is the square orthogonal transformation induced by $U(\boldsymbol{\theta})$ on the unary subspace, whereas $\mathbf{W}$ is the effective $m\times n$ layer transformation. In the square case, $\mathbf{E}_n=\boldsymbol{\Pi}_n=\mathbf{I}_n$, so $\mathbf{W}=\mathbf{Q}(\boldsymbol{\theta})$, consistent with the notation in Section~\ref{sec:method}.
    
    \begin{enumerate}
    	
    	\item \textbf{State preparation} --- Let $\mathbf{x}\in\mathbb{R}^n$ satisfy $\lVert\mathbf{x}\rVert_2=1$. The $q$ data qubits are initialized in the ground state, and an ancillary qubit is placed in uniform superposition:
    	\[
    	|\psi\rangle=\frac{1}{\sqrt{2}}|0\rangle|0\rangle^{\otimes q}+\frac{1}{\sqrt{2}}|1\rangle|0\rangle^{\otimes q}.
    	\]
    	A CNOT gate controlled by the ancilla acts on the first active input qubit, with index $q-n+1$, yielding
    	\[
    	|\psi\rangle=\frac{1}{\sqrt{2}}|0\rangle|0\rangle^{\otimes q}+\frac{1}{\sqrt{2}}|1\rangle|\mathbf{e}_{q-n+1}\rangle,
    	\]
    	where $|\mathbf{e}_k\rangle$ denotes the unary basis state whose $k$th qubit is in state $|1\rangle$.
    	
    	\item \textbf{Amplitude encoding} --- The data loader $S(\mathbf{x})$ encodes the input on the bottom $n$ qubits:
    	\[
    	S(\mathbf{x})|\mathbf{e}_{q-n+1}\rangle=\sum_{i=1}^{n}x_i|\mathbf{e}_{q-n+i}\rangle.
    	\]
    	Because the RBS gates composing $S(\mathbf{x})$ preserve the ground state, applying $I\otimes S(\mathbf{x})$ gives
    	\[
    	|\psi\rangle=\frac{1}{\sqrt{2}}|0\rangle|0\rangle^{\otimes q}+\frac{1}{\sqrt{2}}|1\rangle\sum_{i=1}^{n}x_i|\mathbf{e}_{q-n+i}\rangle.
    	\]
    	
    	\item \textbf{Pyramidal orthogonal transformation} --- The parameterized quantum unitary $U(\boldsymbol{\theta})$ acts as $\mathbf{Q}(\boldsymbol{\theta})$ on the unary subspace. Let
    	\[
    \widetilde{\mathbf{z}}=\mathbf{Q}(\boldsymbol{\theta})\mathbf{E}_n\mathbf{x}\in\mathbb{R}^{q}.
    	\]
    	The complete evolved data state is
    	\[
    U(\boldsymbol{\theta})S(\mathbf{x})|\mathbf{e}_{q-n+1}\rangle=\sum_{k=1}^{q}\widetilde{z}_k|\mathbf{e}_k\rangle.
    	\]
    	The classical output is obtained by retaining its bottom $m$ components:
    	\[
    \mathbf{z}=\boldsymbol{\Pi}_m\widetilde{\mathbf{z}}=\mathbf{W}\mathbf{x},
    \qquad
    z_j=\widetilde{z}_{q-m+j}=\sum_{i=1}^{n}W_{ji}x_i.
    	\]
    	When $m<n$, the remaining $n-m$ amplitudes stay in the quantum state but are not returned as layer outputs. Applying $I\otimes U(\boldsymbol{\theta})$ to the joint state therefore gives
    	\[
    |\psi\rangle=\frac{1}{\sqrt{2}}|0\rangle|0\rangle^{\otimes q}+\frac{1}{\sqrt{2}}|1\rangle\sum_{k=1}^{q}\widetilde{z}_k|\mathbf{e}_k\rangle.
    	\]
    	
    	\item \textbf{Tomography and sign extraction} --- Let $\mathbf{r}=(1/\sqrt{q},\ldots,1/\sqrt{q})\in\mathbb{R}^{q}$, and let $S(\mathbf{r})$ denote its data loader. Applying $I\otimes S^{\dagger}(\mathbf{r})$ gives
    	\[
    |\psi\rangle=\frac{1}{\sqrt{2}}|0\rangle|0\rangle^{\otimes q}+\frac{1}{\sqrt{2}}|1\rangle S^{\dagger}(\mathbf{r})\sum_{k=1}^{q}\widetilde{z}_k|\mathbf{e}_k\rangle.
    	\]
    	After applying an X gate to the ancilla and a CNOT gate controlled by the ancilla on the first data qubit, the state becomes
    	\[
    |\psi\rangle=\frac{1}{\sqrt{2}}|1\rangle|\mathbf{e}_1\rangle+\frac{1}{\sqrt{2}}|0\rangle S^{\dagger}(\mathbf{r})\sum_{k=1}^{q}\widetilde{z}_k|\mathbf{e}_k\rangle.
    	\]
    	Applying $I\otimes S(\mathbf{r})$ then yields
    	\[
    |\psi\rangle=\frac{1}{\sqrt{2}}|1\rangle\sum_{k=1}^{q}\frac{1}{\sqrt{q}}|\mathbf{e}_k\rangle+\frac{1}{\sqrt{2}}|0\rangle\sum_{k=1}^{q}\widetilde{z}_k|\mathbf{e}_k\rangle.
    	\]
    	Finally, a Hadamard gate on the ancilla produces
    	\[
    |\psi\rangle=\frac{1}{2}\sum_{k=1}^{q}\left(\widetilde{z}_k+\frac{1}{\sqrt{q}}\right)|0,\mathbf{e}_k\rangle+\frac{1}{2}\sum_{k=1}^{q}\left(\widetilde{z}_k-\frac{1}{\sqrt{q}}\right)|1,\mathbf{e}_k\rangle.
    	\]
    For the $j$th retained output, let $k_j=q-m+j$. Since $z_j=\widetilde{z}_{k_j}$,
    	\[
    	\sqrt{q}\left(\Pr[0,\mathbf{e}_{k_j}]-\Pr[1,\mathbf{e}_{k_j}]\right)
    	=
    \frac{\sqrt{q}}{4}\left(z_j+\frac{1}{\sqrt{q}}\right)^2
    	-
    \frac{\sqrt{q}}{4}\left(z_j-\frac{1}{\sqrt{q}}\right)^2
    	=
    z_j.
    	\]
    	
    \end{enumerate}

	\section{Nomenclature}
	\label{sec:nomenclature}
	
	\begingroup
	\normalsize
	\setlength{\LTleft}{0pt}
	\setlength{\LTright}{0pt}
	\renewcommand{\arraystretch}{1.05}
	\begin{longtable}{@{}p{0.21\textwidth}p{0.75\textwidth}@{}}
		\multicolumn{2}{@{}l}{\textbf{Data and Operator Formulation}}\\[0.2em]
		$\mathcal{G},\mathcal{G}_{\theta}$ & Target operator and its parameterized DeepONet approximation.\\
		$\mathcal{A}$ & Synthetic antiderivative operator.\\
		$\mathcal{F},\hat{\mathcal{F}}$ & Transient-to-Transient Operator and its learned surrogate.\\
		$\mathcal{H}$ & Causal Forecasting Operator.\\
		$\mathcal{P},\mathcal{P}_t$ & Causal Pointwise Transient Operator and its member indexed by time $t$.\\
		$\mathcal{U},\mathcal{Y}$ & Input- and output-trajectory function spaces in the power-system experiments.\\
		$\mathcal{U}_{\mathrm{past}},\mathcal{Y}_{\mathrm{future}}$ & Past-input and future-output function spaces for causal forecasting.\\
		$\mathcal{D}$ & Supervised training dataset.\\
		$\mathbf{u}_{i},\mathbf{u}$ & Discretized input function evaluated at fixed sensor locations.\\
		$\mathbf{u}_{ij},\mathcal{B}_{i},\mathcal{B}_{*}$ & Sliding-window input, complete parent-trajectory block, and generic test block in the online experiment.\\
		$\mathbf{y}_{ij},\mathbf{y}$ & Spatial or temporal query coordinate.\\
		$s_{ij}$ & Ground-truth operator value at query coordinate $\mathbf{y}_{ij}$.\\
		$\mathbf{s}_i$ & Discretized output trajectory in the power-system formulation.\\
		$d_u,d_y$ & Dimensions of the discretized input function and query coordinate.\\
		$N,N_{\mathrm{test}}$ & Numbers of training and test units, respectively.\\
		$M_i,M$ & Number of queries for unit $i$ and common number of queries per unit in the finite-sample analysis.\\[0.4em]
		
		\multicolumn{2}{@{}l}{\textbf{Network Architecture and Complexity}}\\[0.2em]
		$\mathbf{b},\mathbf{t}$ & Branch and trunk output vectors.\\
		$p$ & Common dimension of the branch and trunk output vectors.\\
		$\beta$ & Trainable scalar bias added to the branch--trunk inner product.\\
		$\mathcal{L}(\theta)$ & Training loss for the complete model.\\
		$\theta,\theta_\ell,d_{\theta}$ & Complete-model parameters, complete parameters of ensemble member $\ell$, and total number of trainable parameters.\\
		$d_{\mathrm{in}}$ & Raw input dimension of the first quantum hidden layer before appending the slack coordinate.\\
		$m,n$ & Output and input dimensions of a generalized QOrthoNN transformation; the square hidden-layer case has $m=n$.\\
		$\mathbf{W}$ & Effective $m\times n$ QOrthoNN layer transformation; $\mathbf{W}=\mathbf{Q}(\boldsymbol{\theta})$ in the square case.\\
		$\mathbf{E}_n,\boldsymbol{\Pi}_m$ & Input-embedding and output-selection matrices used to define the effective transformation $\mathbf{W}$.\\
		$\bar{\mathbf{x}},\mathbf{x}$ & Min--max-normalized raw input and unit-norm input of a QOrthoNN layer.\\
		$\widetilde{\mathbf{z}},\mathbf{z}$ & Transformed $q$-dimensional vector before output selection and retained $m$-dimensional layer output.\\
		$P$ & Number of available classical processors in the work--depth comparison.\\
		$W,D$ & Work and computational depth in Brent's scheduling bound.\\
		
		$\delta$ & Per-layer $\ell_\infty$ tomography tolerance.\\[0.4em]
		
		\multicolumn{2}{@{}l}{\textbf{Quantum Circuit and Ensemble}}\\[0.2em]
		$L$ & Number of independently trained Quantum DeepONet ensemble members.\\
		$q$ & Number of data qubits in the generalized circuit, $q=\max(m,n)$, excluding the ancilla.\\
		$\boldsymbol{\theta}$ & Vector of RBS rotation angles in a QOrthoNN layer.\\
		$\boldsymbol{\theta}_{\mathrm{B}},\boldsymbol{\theta}_{\mathrm{T}}$ & RBS-angle vectors for the branch and trunk QOrthoNN layers, respectively.\\
		$\mathbf{x}^{(\ell)},\boldsymbol{\theta}^{(\ell)}$ & Layer input and RBS-angle vector for multiplexed ensemble member $\ell$.\\
		$\phi$ & Rotation angle of a single RBS gate.\\
		$S(\mathbf{x})$ & Data-dependent unitary that amplitude-encodes $\mathbf{x}$ in the unary subspace.\\
		$U(\boldsymbol{\theta})$ & Pyramidal parameterized unitary implementing the QOrthoNN transformation.\\
		$|\psi\rangle$ & Quantum state during QOrthoNN evaluation; the included registers are specified in context.\\
		$|\mathbf{e}_i\rangle$ & Unary computational basis state whose $i$-th qubit in the data register is in state $|1\rangle$.\\
		$\mathbf{r}$ & Uniform reference vector used for tomography and sign extraction.\\
		$N_{\mathrm{shots}}$ & Number of quantum measurement shots.\\
		$\mathbf{Q}(\boldsymbol{\theta})$ & Square orthogonal transformation induced by $U(\boldsymbol{\theta})$ on the unary subspace.\\
		$\lambda$ & Noise strength of the simulated depolarizing channel.\\[0.4em]
		$\rho,\mathcal{E}_A,I_A,\operatorname{Tr}_A$ & Quantum density matrix, local depolarizing channel, identity, and partial trace on subsystem $A$.\\[0.4em]
		
		\multicolumn{2}{@{}l}{\textbf{Uncertainty Quantification}}\\[0.2em]
		$\mu(\mathbf{u})(\mathbf{y}),\mu_{ij}$ & Ensemble mean prediction and its value for query $j$ of unit $i$.\\
		$\sigma(\mathbf{u})(\mathbf{y})$ & Ensemble standard deviation measuring model disagreement.\\
		$\tilde{\sigma},\tilde\sigma_{ij},\epsilon$ & Regularized ensemble standard deviation, its indexed value, and the positive stabilizer in $\tilde{\sigma}=\sigma+\epsilon$.\\
		$\mathcal{D}_{\mathrm{cal}},\mathcal{D}_{\mathrm{cal},i}$ & Calibration dataset and its $i$-th complete calibration unit.\\
		$K,N_{\mathrm{cal}}$ & Number of calibration units and total number of calibration scores.\\
		$R,R_{ij}$ & Generic nonconformity score and the score for query $j$ of unit $i$.\\
		$\mathbf{R}_i$ & Complete vector of query-level scores for unit $i$.\\
		$R^{\mathrm{cal}}_{(k)},R_{(k)},R^{-i}_{(k)}$ & $k$-th order statistics of the calibration pool, augmented pool, and augmented pool after removing unit $i$, respectively.\\
		$a,r,\gamma$ & Quantile interpolation variables, with $a=(N_{\mathrm{cal}}-1)(1-\alpha)$, $r=\lfloor a\rfloor+1$, and $\gamma=a-\lfloor a\rfloor$.\\
		$\eta$ & Rounding remainder in $r=(KM-1)(1-\alpha)+\eta$.\\
		$\widehat q,\widehat q_{-i}$ & Calibration threshold and its leave-one-unit-out counterpart used in the proof.\\
		$C_\alpha(\mathbf{u},\mathbf{y})$ & Conformal prediction interval at input $\mathbf{u}$ and query coordinate $\mathbf{y}$.\\
		$w(\mathbf{u})(\mathbf{y})$ & Conformal interval half-width shown in Figure~\ref{fig:main_framework}.\\
		$\alpha$ & Target miscoverage level.\\
		$J$ & Query index drawn uniformly from $\{1,\ldots,M\}$.\\
		$V_i$ & Fraction of unit $i$ covered by its leave-one-unit-out threshold in the proof.\\
		$\widehat{\mathrm{Cov}},\widehat{\mathrm{Cov}}_{\mathrm{online}}$ & Empirical query-averaged coverage in the general and online settings.\\[0.4em]
		
		\multicolumn{2}{@{}l}{\textbf{Experiments}}\\[0.2em]
		$k_l,d(x,x'),l$ & Covariance kernel, distance between input coordinates, and kernel length scale in the synthetic benchmarks.\\
		$d$ & State-space dimensionality in the power-system operators.\\
		$T,\tau,T_f,t_0,\Delta t$ & Trajectory horizon, history-window length, forecast horizon, contingency time, and next-step interval, respectively.\\
		$f_1,\ldots,f_5$ & Dominant output frequencies used in the Fourier-augmented trunk coordinates.\\
		$f_c,f_s$ & Cutoff and sampling frequencies used to filter active-power trajectories.\\
		$\gamma_{\mathrm{LR}}$ & Multiplicative learning-rate decay factor.\\
		$T_1,T_2$ & Qubit relaxation and dephasing times reported in the backend snapshots.\\
	\end{longtable}
	\endgroup

    \newpage
    \section{Training Hyperparameters and Dataset Splits}\label{sec:train_hp}

    We list the hyperparameters used during training in Table~\ref{tab:training_hyperparams}. For experiments with learning-rate decay, we use a multiplicative schedule with factor $\gamma_{\mathrm{LR}}$ at each optimization step and hold the learning rate at the reported minimum.

    \begin{table}[t]

        \caption{Training hyperparameters. ``Full'' indicates full-batch gradient descent. ``--'' indicates a constant learning rate (no scheduler). Synthetic rows report training, calibration, and test counts, whereas power-system rows report split fractions. $\gamma_{\mathrm{LR}}$ is the learning-rate decay factor.}
		\label{tab:training_hyperparams}
		
        \begin{center}
            \small
            \begin{tabular}{lccccccccc}
                \hline
                \textbf{Experiment} & \textbf{Train} & \textbf{Cal.} & \textbf{Test} & \textbf{Batch} & \textbf{Iters} & \textbf{LR} & \textbf{Min LR} & $\gamma_{\mathrm{LR}}$ & \textbf{Loss} \\
                \hline

                Antiderivative & 200  & 50  & 50  & Full & 30k & $1\mathrm{e}{-3}$ & --              & --   & MSE \\
                Advection      & 1000 & 200 & 200 & Full & 40k & $1\mathrm{e}{-3}$ & $5\mathrm{e}{-4}$ & 0.9999653 & MSE \\
                Offline V-to-P & 0.8  & 0.1 & 0.1 & 64   & 40k & $5\mathrm{e}{-3}$ & $5\mathrm{e}{-4}$ & 0.9998465 & Rel. $L_2$ \\
                Offline V-to-V & 0.8  & 0.1 & 0.1 & 256  & 40k & $5\mathrm{e}{-4}$ & --              & --   & MSE \\
                Online V-to-V  & 0.8  & 0.1 & 0.1 & 256  & 15k & $1\mathrm{e}{-2}$ & --              & --   & MSE \\

                \hline
            \end{tabular}
        \end{center}

    \end{table}

    \section{Multinomial Sampling Versus Per-Shot Simulations}
    \label{sec:multinomial}

    To accelerate the simulation of quantum circuits, we employ multinomial sampling from the final statevector or density matrix probability distribution, rather than simulating each shot individually. This single simulation followed by sampling is computationally efficient.

    To validate this approach, we compare its results against a full per-shot simulation, which applies noise at every step for every shot. Figure~\ref{fig:appendix_c_compare} shows this comparison for the antiderivative operator. The results show close agreement across relative $L_2$ error, coverage, average width, and maximum width, supporting multinomial sampling as an accurate and efficient substitute for the more costly per-shot simulation.

    \begin{figure}[t]
        \centering
        \includegraphics[width=1.0\textwidth]{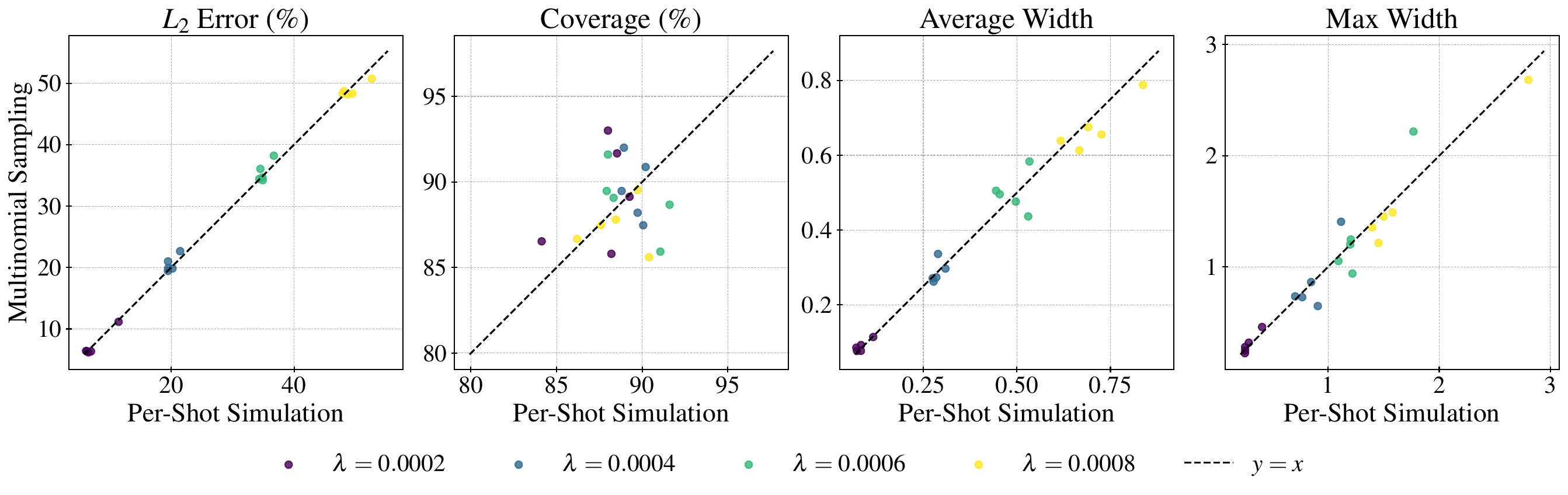}
        \vspace{-2em}
        \caption{Validation of multinomial sampling. Parity plots comparing the reported metrics from per-shot simulation (horizontal axis) and multinomial sampling (vertical axis). The dashed line represents equality.}
        \label{fig:appendix_c_compare}
    \end{figure}

    \clearpage
    \section{System Snapshots}\label{sec:snapshots}

    The device-calibrated composite noise models were constructed from calibration data for three Qiskit backends: \texttt{fake\_brisbane}, \texttt{fake\_torino}, and \texttt{fake\_marrakesh}. All simulations were transpiled to the basis gate set and coupling map of the respective backend. Table~\ref{tab:backend_specs} reports the qubit counts, basis gate sets, coupling sizes, coherence times, and error rates taken from these snapshots.

    \begin{table}[htbp]

        \caption{Backend characteristics for the simulated quantum devices. Coherence times are reported in microseconds, and readout and gate errors are reported as percentages. The $T_1$, $T_2$, and readout-error statistics are computed across qubits, whereas gate-error statistics are computed across basis-gate entries. Only the mean is reported for two-qubit gate errors. Two-qubit means exclude entries with gate error $1.0$, which IBM uses for undefined or stale calibration estimates.}
        \label{tab:backend_specs}

        \begin{center}
            \small

            \begin{tabular}{lccc}
                \hline
                \textbf{Backend} & \textbf{Qubits} & \textbf{Basis Gates} & \textbf{Coupling} \\
                \hline

                fake\_brisbane  & 127 & \{ecr, id, rz, sx, x\} & 144 edges \\
                fake\_torino    & 133 & \{cz, id, rz, sx, x\}  & 300 edges \\
                fake\_marrakesh & 156 & \{cz, id, rz, sx, x\}  & 352 edges \\

                \hline
            \end{tabular}

            \vspace{0.6em}

            \begin{tabular}{lcccccc}
                \hline
                \textbf{Backend} & $T_1$ avg ($\mu\mathrm{s}$) & $T_1$ min ($\mu\mathrm{s}$) & $T_1$ max ($\mu\mathrm{s}$) & $T_2$ avg ($\mu\mathrm{s}$) & $T_2$ min ($\mu\mathrm{s}$) & $T_2$ max ($\mu\mathrm{s}$) \\
                \hline
                fake\_brisbane  & 233.71 & 9.94 & 425.17 & 160.58 & 17.14 & 411.53 \\
                fake\_torino    & 173.61 & 3.16 & 307.73 & 145.19 & 6.13 & 349.88 \\
                fake\_marrakesh & 207.13 & 6.80 & 499.05 & 147.00 & 0.18 & 534.79 \\

                \hline
            \end{tabular}

            \vspace{0.6em}

            \begin{tabular}{lccc}
                \hline
                \textbf{Backend} & \textbf{Readout Err (avg)} & \textbf{Readout Err (min)} & \textbf{Readout Err (max)} \\
                \hline

                fake\_brisbane  & 3.11 & 0.56 & 23.95 \\
                fake\_torino    & 4.67 & 0.51 & 56.52 \\
                fake\_marrakesh & 2.70 & 0.27 & 47.39 \\

                \hline
            \end{tabular}

            \vspace{0.6em}

            \begin{tabular}{lcccc}
                \hline
                \textbf{Backend} & \textbf{1Q Err (avg)} & \textbf{1Q Err (min)} & \textbf{1Q Err (max)} & \textbf{2Q Err (avg)} \\
                \hline

                fake\_brisbane  & 0.11 & 0.00 & 14.09 & 1.28 \\
                fake\_torino    & 0.06 & 0.00 & 1.74  & 0.72 \\
                fake\_marrakesh & 0.06 & 0.00 & 4.20  & 0.72 \\

                \hline
            \end{tabular}

        \end{center}

    \end{table}

\end{document}